\DeclarePairedDelimiter\ceil{\lceil}{\rceil}
\newtheorem{theorem}{Theorem}
\newtheorem{definition}{Definition}
\newtheorem{claim}{Claim}
\newtheorem{corollary}{Corollary}
\newtheorem{prop}{Proposition}
\newtheorem{lemma}{Lemma}
\newcommand{\E}{\mathop{\mathbb{E}}}
\def\X{\mathcal{X}}
\def\H{\mathcal{H}}
\def\F{\mathcal{F}}
\def\P{\mathcal{P}}
\def\Y{\mathcal{Y}}
\def\A{\mathcal{A}}
\def\F{\mathcal{F}}
\def\Z{\mathcal{Z}}
\def\k{\mathtt{k}}
\def\L{\mathtt{L}}
\def\M{\mathtt{M}}
\def\R{\mathtt{R}}
\def\T{\mathcal{T}}
\title{List Online Classification}
\author{Shay Moran\footnote{Departments of Mathematics and Computer Science, Technion and Google Research.
Robert J.\ Shillman Fellow; supported by ISF grant 1225/20, by BSF grant 2018385, by an Azrieli Faculty Fellowship, by Israel PBC-VATAT, by the Technion Center for Machine Learning and Intelligent Systems (MLIS), and by the European Union (ERC, GENERALIZATION, 101039692). Views and opinions expressed are however those of the author(s) only and do not necessarily reflect those of the European Union or the European Research Council Executive Agency. Neither the European Union nor the granting authority can be held responsible for them.} \and
Ohad Sharon \footnote{Department of Mathematics, Technion.} \and
Iska Tsubari\footnote{Department of Mathematics, Technion.
Supported by the European Union (ERC, GENERALIZATION, 101039692). Views and opinions expressed are however those of the author(s) only and do not necessarily reflect those of the European Union or the European Research Council Executive Agency. Neither the European Union nor the granting authority can be held responsible for them.} \and
Sivan Yosebashvili \footnote{Mobileye. Part of this work was done while the author was at the Computer Science Department at the Technion.}}
\begin{document}

\maketitle

\begin{abstract}
  We study multiclass online prediction where the learner can predict using a list of multiple labels (as opposed to just one label in the traditional setting).
  We characterize learnability in this model using the $b$-ary Littlestone dimension.
  This dimension is a variation of the classical Littlestone dimension with the difference that binary mistake trees are replaced with $(k+1)$-ary mistake trees, where $k$ is the number of labels in the list.
  In the agnostic setting, we explore different scenarios depending on whether the comparator class consists of single-labeled or multi-labeled functions and its tradeoff with the size of the lists the algorithm uses. 
  We find that it is possible to achieve negative regret in some cases and provide a complete characterization of when this is possible.
    

  As part of our work, we adapt classical algorithms such as Littlestone's SOA and Rosenblatt's Perceptron to predict using lists of labels.
  We also establish combinatorial results for list-learnable classes, including a list online version of the Sauer-Shelah-Perles Lemma.
  We state our results within the framework of pattern classes --- a generalization of hypothesis classes which can represent adaptive hypotheses 
  (i.e.\ functions with memory), and model data-dependent assumptions such as linear classification with margin.%

\end{abstract}

\section{Introduction}

In certain situations where supervised learning is used, it is acceptable if the prediction is presented as a short list of candidate outputs. 
For instance, recommendation systems like those used by Netflix, Amazon, and YouTube (see Figure~\ref{fig:youtube}) recommend a list of movies or products to their users for selection (as opposed to just a single movie/product). 
In fact, even in tasks where the objective is to predict a single label, it may be helpful to first reduce the output space (which can potentially be very large) by learning a short list of options.
For instance, medical doctors could consult an algorithm that generates a short list of potential diagnoses based on a patient's medical data.



List prediction rules naturally arise in the setting of conformal learning.
In this model, algorithms make their predictions while also offering some indication of the level of uncertainty or confidence in those predictions.
For example in multiclass classification tasks, given an unlabeled test example~$x$, the conformal learner might output a list of all possible classes along with scores which reflect the probability that $x$ belongs to each class. 
This list can then be truncated to a shorter one which contains only the classes with the highest score. 
We refer the reader to the book by~\citet*{VGS05} and the surveys by~\citet*{ShaferV08,AB21} for more details.

\begin{figure}
\centering
  \centering
  \includegraphics[width=.75\linewidth]{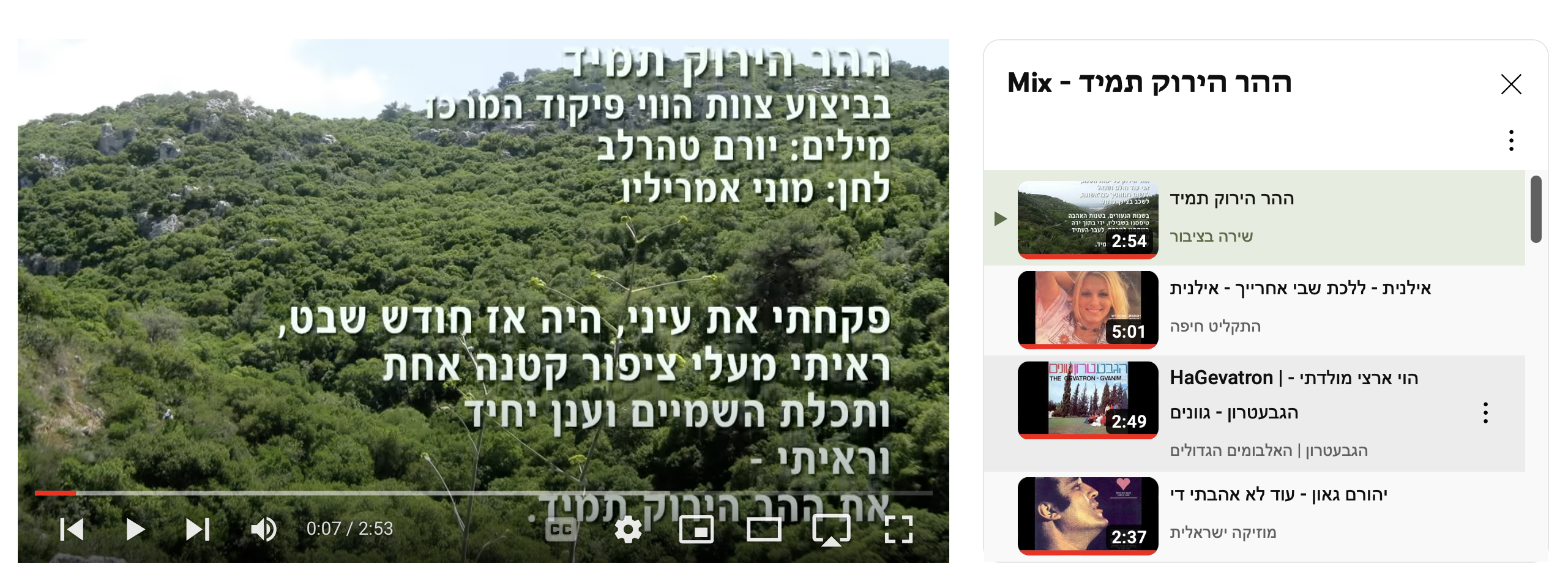}
  \caption{YouTube's recommendation system generates a short list of videos to users based on their past watch history and likes/dislikes.}
  \label{fig:youtube}
\end{figure}

Recently, list learning has been examined within the PAC (Probably Approximately Correct) framework. \citet*{Brukhim22Multiclass} proved a characterization of multiclass PAC learning using the Daniely-Shwartz (DS) dimension. 
A central idea in this work is the use of list learners to decrease the (possibly infinite) label space; this enables them to learn the reduced problem using known methods which apply when the number of labels is bounded. 
In a subsequent study,~\citet*{CharikarP22} investigated which classes can be learned in the list PAC model, and gave a characterization of list learnability using a natural extension of the DS dimension. In the process, they identified key characteristics of PAC learnable classes that extend naturally (but not trivially) to the list setting.
    


\smallskip

The purpose of this study is to examine list learnability in the setting of multiclass online classification. 
Specifically, we aim to understand:

\begin{tcolorbox}
\begin{center}
Guiding Questions
\end{center}

\begin{center}
\begin{itemize}
\item 
What tasks are online learnable by algorithms that make predictions using short lists of labels?
Is there a natural dimension that measures list online learnability?
\item How does the possibility of using lists impact the mistake- and regret-bounds?
How do these bounds improve as the algorithm is allowed to use larger lists?

\end{itemize}
\end{center}
\end{tcolorbox}
We study these questions in the context of multiclass online prediction using the 0/1 classification loss. 
Extensions to weighted and continuous losses are left for future research.

\subsection{Our Contribution}

\paragraph{Realizable Case (Theorems~\ref{thm:realchar} and~\ref{thm:realquant}).}
Our first main result characterizes list learnability using a natural variation of the Littlestone dimension~\citep{Littlestone88online}.
We show that a class is $k$-list online learnable (i.e.\ can be learned by an algorithm that predicts using list of size $k$)
if and only if its $(k+1)$-ary Littlestone dimension is bounded. 
The latter is defined as the maximum possible depth of a complete $(k+1)$-ary mistake tree that is shattered by the class.
We further show that the list Littlestone dimension is equal to the optimal mistake bound attainable by deterministic learners.

\paragraph{Agnostic Case (Theorems~\ref{thm:agnchar} and~\ref{thm:negreg}).}
We characterize agnostic list online learnability by showing it is also captured by a bounded list Littlestone dimension
(and hence agnostic and realizable list learning are equivalent).
Our proof hinges on an effective transformation that converts a list learner in the realizable case to an agnostic list learner. 

We further investigate how does the expressiveness of the comparator class and the resources of the algorithm
impact the optimal regret rates. For example, 
consider the task of agnostic learning a \underline{multi-labeled} comparator class 
consisting of functions which maps an input instance~$x$ to a set of $k'=3$ output labels. 
How does the optimal regret changes as we use list learners with different list size $k=1,2,3,\ldots$?
Clearly , the regret cannot increase as we increase $k$; does the regret decreases? by how much? can it become negative?
We show that once the algorithm can use larger lists than the minimum required for learning, 
then it can achieve negative regret (which can be called \emph{pride}). 
In Theorem~\ref{thm:negreg} we characterize the cases in which negative regret is possible
as a function of the trade-off between the algorithm's resources and the class complexity.


\paragraph{Variations of Classical Results and Algorithms.}
We develop and analyze variations of classical algorithms such as Rosenblatt's Perceptron (Figure~\ref{fig:perceptron}) 
and Littlestone's SOA (Figure~\ref{fig:ListSOA}). The new modified algorithms predict using lists and hence are more expressive.
For example, our $k$-list Perceptron algorithm learns any linearly separable sequence with margin between the top class and the $(k+1)$'th top class (as opposed to margin between the top and the second top classes in the standard multiclass Perceptron).

Our transformation of realizable list learners to agnostic list learners extends the one by \citet*{Ben-david09agnostic}
in the binary case. Along the way we prove an online variation of the Sauer-Shelah-Perles Lemma for list learnable classes (Proposition~\ref{prop:ssp}).

\paragraph{Pattern Classes.} We present our results using a general framework of pattern classes.
In a nutshell, a pattern class $\P$ is a set of sequences of examples; these sequences are thought of as the set of allowable sequences that are expected as inputs. Thus, in the realizable case, the goal is to design a learning rule that makes a bounded number of mistake on every pattern in the class.
Every hypothesis class $\H$ induces a pattern class consisting of all sequences that are realizable by $\H$:
\[\P(\H)=\{S : S\text{ is realizable by }\H\}\] 
However, pattern classes have the advantage of naturally expressing data-dependent assumption such as linear classification with margin (see Section~\ref{sec:perceptron}). In fact, pattern classes are more expressive than hypothesis classes:
in Appendix~\ref{app:patternexample} we give examples of an online learnable pattern class $\P$ such that for every hypothesis class $\H$,
if $\P\subseteq \P(\H)$ then $\H$ is not online learnable. 

Pattern classes are even more expressive than partial concept classes \citep*{LongPartial,AlonHHM21}.
In fact, partial concept classes essentially correspond to symmetric pattern classes:
i.e.\ pattern classes with the property that a permutation of every pattern in the class is also in the class.
Asymmetric pattern classes can be used to represent online functions (or functions with memory).\footnote{Online functions are maps $h:\X^\star \to \Y$ that get as an input a finite sequence of instances $x_1,\ldots, x_T$ and produce a sequence of labels $y_1,\ldots, y_T$, where $y_t=h(x_t; x_{t-1},\ldots, x_1)$ is interpreted as the label of $x_t$. In contrast with oblivious (memoryless) functions, 
the rule that assigns $y_t$ to $x_t$ can depend on the history (i.e.\ on $x_1,\ldots x_{t-1}$). As an example, consider the online function which gets as an input a sequence of points $x_1,\ldots,x_T\in\mathbb{R}^d$ and produces labels $y_1,\ldots y_T\in\{0,1\}$ such that $y_1=0,y_2=1$ and for $t>2$, $y_t$ is set to be the label of the closest point to $x_t$ among $x_1,\ldots x_{t-1}$ (breaking ties arbitrarily). 
One can model online function classes using pattern classes by taking the set of all patterns $(x_1,y_1),\ldots,(x_t,y_t)$
that are obtained by feeding $x_1,\ldots,x_t$ to one of the online functions in the class.}



\subsection{Organization}
We begin with giving the necessary background and formal definitions in Section~\ref{sec:def}.
While this section contains standard and basic definitions in online learning, we still recommend that even the experienced readers will skim through it, especially through the less standard parts which define learnability of pattern classes and introduce the list learning setting.

In Section~\ref{sec:results} we state and present our main results, and in Section~\ref{sec:proofs} we provide the proofs along with some additional results with more elaborate bounds. Finally, Section~\ref{sec:future} contains some suggestions for future research.

\section{Definitions and Background}\label{sec:def}

Let $\X$ be a set called the domain and let $\Y$ be a set called the label space.
We assume that $\Y$ is finite (but possibly very large).
A pair $z=(x,y)\in\X\times \Y$ is called an example,
and $\Z=\X\times \Y$ denotes the space of examples.
An hypothesis (or a concept) is a mapping $h:\X\to\Y$, 
an hypothesis class (or a concept class) $\H$ is a set of hypotheses.
More generally, for $\lvert \Y\rvert \geq  k \geq 1$, a $k$ multi-labeled hypothesis is a map $h:\X\to \binom{\Y}{k}$ taking
an input $x\in \X$ to a set of $k$ labels from $\Y$.
A $k$ multi-labeled hypothesis class is a class of $k$ multi-labeled hypotheses.

\paragraph{Pattern Classes.}
We use pattern classes -- a generalization of hypothesis classes which enables modelling data-dependent assumptions such as linear classification with margin in a unified and natural way.

A pattern $S=\{z_t\}_{t=1}^T$ is a finite sequence of examples.
For $0 < i\leq T$ we let $S_{< i}$ denote the prefix of $S$ which consists of the first $i-1$ examples in~$S$.
For a pair of patterns $S_1,S_2$ we denote by $S_1\circ S_2$ the pattern obtained by concatenating $S_2$ after $S_1$.
We denote by $S_1 \subseteq S_2$ the relation ``$S_1$ is a subsequence of $S_2$''.
For a set $A$, we denote by $A^\star = \cup_{T=0}^\infty A^T$ the set of all finite sequences\footnote{Here, $A^0$ is the set $A^{0}=\{\emptyset\}$ which contains only the empty sequence $\emptyset$.} of elements in $A$. 
Thus,~$\Z^\star$ is the set of all patterns.

A pattern class $\P\subseteq \Z^\star$ is a set of patterns which is \emph{downward closed}: 
for every $S\in \P$, if $S'\subseteq S$ is a subsequence of $S$ then $S'\in \P$.
For a pattern class $\P$, $y \in \Y$ and $x\in \X$, we denote by $\P_{x\rightarrow y}$ the subset of $\P$ consists of all patterns such that their concatenation with the example $(x,y)$ is also in $\P$:
\[\P_{x\rightarrow y} = \{S: (x,y)\circ S\in \P\}.\]
We say that a sequence/pattern $S$ is realizable by~$\P$ if $S\in \P$.
we say that a sequence/pattern $S\in \Z^\star$ is consistent with (or realizable by) a multi-labeled hypothesis $h:\X\to\Y$ if for every $(x_t,y_t)\in S$ we have $y_t\in h(x_t)$ (or $y_t=h(x_t)$ when $h$ is a single-labeled hypothesis).

Notice that every (multi-labeled) hypothesis class $\H$ induces a pattern class:
\[\P(\H)=\{S\in\Z^\star : S \text{ is consistent with some } h\in \H\}.\]
So, a sequence $S$ is realizable by $\H$ if and only if $S\in\P(\H)$.
The pattern class $\P(\H)$ is equivalent to $\H$ from a learning theoretic perspective
in the sense that a learning rule learns $\H$ if and only if it learns $\P(\H)$.
Hence, pattern classes are at least as expressive as hypothesis classes.
The upside is that pattern classes allow to naturally express data-dependent assumptions such as linear separability with margin
(as we study in more detail below).
This is done by simply considering the pattern class which consists of all sequences that satisfy the assumed data-dependent condition.
Pattern classes extend the notion of partial concept classes~\citep{auer1999structural,AlonHHM21}.
In Appendix~\ref{app:patternexample} we give examples of a learnable task that can be represented by a pattern class,
but not by an hypothesis class. That is, a learnable pattern class $\P$ such that every hypothesis class $\H$
for which $\P\subseteq \P(\H)$ is not learnable.

\paragraph{Deterministic Learning Rules.}
A (deterministic) online learner is a mapping $\A: \Z^\star \times \X \to \Y$.
That is, it is a mapping that maps a finite sequence $S\in \Z^\star$ (the past examples),  
and an unlabeled example $x$ (the current test point) to a label $y$, which is denoted by $y=\A(x ; S)$.

Let $k\geq 1$; a deterministic $k$-list learner is a mapping $\A: \Z^\star \times \X \to \binom{\Y}{k}$.
That is, rather than outputting a single label, a $k$-list learner outputs a set of $k$ labels.
For a sequence $S=\{(x_t,y_t)\}_{t=1}^T\in \Z^\star$ we denote by $\M(\A; S)$ the number of mistakes $\A$ makes on $S$:
\[\M\bigl(\A;S\bigr) = \sum_{t=1}^n 1\bigl[y_t\notin \A(x_t ; S_{<t})\bigr].\]

\paragraph{Randomized Learning Rules.}
In the randomized setting we follow the standard convention in the online learning literature
and model randomized learners as deterministic mappings $\A:\Z^\star \times \X \to \Delta(\Y)$,
where $\Delta(\Y)$ denote the space of all distributions over $\Y$.
Thus, $\hat p = \A(x;S)$ is a $\lvert\Y\rvert$-dimensional probability vector representing
the probability of the output label $\A$ assigns to $x$ given $S$ (see e.g.~\citet{cesa2006prediction,Shalev-Shwartz12survey,hazan2019introduction}).
For a sequence $S=\{(x_t,y_t)\}_{t=1}^T\in \Z^\star$ we denote by $\M(\A; S)$ the expected number of mistakes $\A$ makes on $S$:
\begin{align*}
\M\bigl(\A;S\bigr) = \E_{\hat y_t \sim p_t}\Bigl[\sum_{t=1}^T 1[\hat y_t\neq y_t]\Bigr]=  \sum_{t=1}^T\Pr_{\hat y_t \sim \hat p_t}[\hat y_t \neq y_t]. \tag{where $\hat p_t = \A(x_t;S_{<t})$}
\end{align*}
 Similarly, for $k\geq 1$ a randomized $k$-list learner is a mapping $\A:\Z^\star \times \X \to \Delta\bigl(\binom{\Y}{k}\bigr)$, where $\Delta\bigl(\binom{\Y}{k}\bigr)$ is the space over all distributions over subsets of size $k$ of $\Y$.
Thus, the expected number mistakes $\A$ makes on a sequence $S$ is given by
\begin{align*}
\M\bigl(\A;S\bigr) = \sum_{t=1}^T\Pr_{\hat L_t \sim \hat q_t}[y_t\notin \hat L_t]. \tag{where $\hat q_t = \A(x_t;S_{<t})$}
\end{align*}




\paragraph{Realizable Case Learnability (Pattern Classes).}
Let $\P$ be a pattern class and let $k\geq 1$. 
We say that $\P$ is $k$-list online learnable if there exists a learning rule $\A$ and $M\in\mathbb{N}$
such that $\M(\A;S)\leq M$ for every input sequence $S$ which is realizable by $\P$.
An hypothesis class $\H$ is $k$-list online learnable if $\P(\H)$ is $k$-list online learnable.
Let $\M_k(\P)$ denote the optimal mistake bound achievable in learning $\P$ by deterministic learners, 
and let $\R_k(\P)$ denote the optimal expected mistake bound in learning $\P$ by randomized learners.

\paragraph{Agnostic Case Learnability (Hypothesis Classes).}
Pattern classes are suitable to define a learning task in terms of its possible input sequences (patterns).
While it is possible to also define agnostic online learning with respect to pattern classes,
we find it more natural to stick to the traditional approach and define it with respect to hypothesis classes.
For example, hypothesis classes allows to explicitly study the tradeoff between the list-size used by the learning rule,
and the list size used by the functions in the class.

Let $h$ be a $k$ multi-labeled hypothesis. For a sequence $S=\{(x_t,y_t)\}_{t=1}^T$ of examples, 
we let $\M(h;S)=\sum_{t=1}^T 1[y_t\notin h(x_t)]$ denote the number of mistakes $h$ makes on $S$.
Let $\H$ be a multi-labeled hypothesis class and let $\A$ be a $k'$ list learning rule (notice that $k'$ might be different than $k$).
For a sequence $S=\{(x_t,y_t)\}_{t=1}^T$ of examples, denote the regret of $\A$ with respect to $\H$ on $S$ by
\[\R_{\H}(\A;S) = \M(\A;S) - \min_{h\in \H}\M(h;S)\]

For an integer $T$, we let $\R_\H(\A;T)=\max\{\R_{\H}(\A;S) : \lvert S\rvert = T\}$.
We say that $\H$ is $k'$-list agnostic learnable if there exists a learning rule $\A$ and a sublinear function $R(T)=o(T)$ such that  $\R_{\H}(\A;T)\leq R(T)$ for every $T$.

\paragraph{Mistake Trees and Littlestone Dimension.} 
A $b$-ary mistake tree is a decision tree in which each internal node has out-degree $b$.
Each internal node $v$ in the tree is associated with an unlabeled example $x(v)\in \X$ and each edge $e$ is associated with a label $y(e)\in \Y$ such that for every internal node $v$ all its $b$ out-going edges are associated with different labels from $\Y$. A branch is a root-to-leaf path in the tree. 
Notice that every branch 
\[v_0\to_{e_0} v_1\to_{e_1}\ldots \to_{e_{r}}v_{r+1}\]
naturally induces a sequence of examples $\{(x_i,y_i)\}_{i=0}^r$ where $x_i=x(v_i)$ and $y_i=y(v_i\to v_{i+1})$.

We say that a mistake tree $\T$ is shattered by a pattern class $\P$ if every branch in $\T$ induces a sequence of examples in $\P$. A mistake tree $\T$ is shattered by a (possibly multi-labeled) hypothesis class $\H$ if it is shattered by $\P(\H)$.
The $b$-ary Littlestone dimension of a pattern class $\P$, denoted $\L_{b-1}(\P)$, 
is the largest possible depth of a complete $b$-ary mistake tree that is shattered by $\P$.
(Thus, $\L_1(\cdot)$ is the classical Littlestone dimension.)
If $\P$ shatters such trees of arbitrarily large depths then we define $\L_{b-1}(\P)=\infty$.
The $b$-ary Littlestone dimension of a (multi-labeled) hypothesis class $\H$ is $\L_{b-1}(\H):=\L_{b-1}(\P(\H))$.

\section{Results}\label{sec:results}

\subsection{Warmup: List Perceptron}\label{sec:perceptron}
As a warmup, we begin with a natural adaptation of the classical Perceptron algorithm by \citet{rosenblatt1958perceptron}
to the list online learning setting. 
Our list Perceptron algorithm may be seen as a natural adaptation of the multiclass Perceptron algorithm by Kesler~\citep{DudaHart1973},
and the more recent variants by~\citet{CramerS03} and by~\citet*{BeygelzimerPSTW19}.

\paragraph{Mutliclass Linear Separability.}
Let $\X=\mathbb{R}^d$ and let $\Y$ denote the label space. 
A sequence of examples $S=\{(x_t,y_t)\}_{t=1}^T$ is called linearly separable
if for every label $y\in \Y$ there exists a $w_y^\star\in \mathbb{R}^d$ such that for every $t=1,\ldots, T$:
\begin{equation}\label{eq:linsep}
(\forall y\neq y_t): w_{y_t}^\star \cdot x_t > w_y^\star \cdot x_t,
\end{equation}
where ``$\cdot$'' denotes the standard inner product 
in $\mathbb{R}^d$. In words, the label $y$ of each point $x$ corresponds to the direction $w_y^\star$ on which $x$ has the largest projection. We follow standard convention and assume that the $w^\star_y$'s are normalized such that
\begin{equation}\label{eq:norm}
\sum_{y\in \Y}\|w^\star_y\|^2 = 1,
\end{equation}
where $\|\cdot\|$ denotes the $\ell_2$ euclidean norm.
If a sequence of vectors $w_y^\star$ for $y\in\Y$ satisfies Equations~\ref{eq:linsep} and~\ref{eq:norm} above then we say that it
\emph{linearly separates $S$}.

\paragraph{Multiclass Margin.}
For $k>0$, the $k$'th margin of a separable sequence $S=\{(x_t,y_t)\}_{t=1}^T$, denoted by $\gamma_k(S)$ is the maximal real number $\gamma_k>0$ 
for which there exist vectors $w_y^\star$ for $y\in \Y$ which linearly separate $S$ such that for every $t\leq T$, 
\[w_{y_t}^\star\cdot x_t - w_y^\star \cdot x_t \geq \gamma_k, \text{ for all but at most $k$ labels $y\in \Y$.}\]
So, if we sort the labels $y$'s according to the inner product $w_y^\star\cdot x_t$ then $y_t$ is the largest and the $(k+1)$'th largest label $y$ in the sorted list satisfies $(w_{y_t}^\star - w_y^\star)\cdot x_t \geq \gamma_k$.

Notice that $\gamma_1\leq \gamma_2\leq\ldots$ and that $\gamma_1$ is the standard margin in multiclass linear classification.

\begin{figure}
\begin{tcolorbox}
\begin{center}
{\bf List Perceptron Algorithm}
\end{center}
\textbf{Parameters:} $\X=\mathbb{R}^d$, $\Y=$ label space, and $k=$ list size.

\textbf{Input:} A linearly separable sequence $S$ of length $T$.

\textbf{Initialize:} For all $y\in \Y$ set $w_{y}= \vec 0$.

For $t=1,\ldots,T$
\begin{enumerate}
    \item Receive unlabeled example $x_t\in \X$.
    \item Sort the labels $y\in \Y$ in a non-increasing order according to the inner product $w_y \cdot x_t$.
    \item Predict the list $P_t$ which consists of the top $k$ labels in the above order.    
    \item Receive correct label $y_t\in \Y$.
    \item If $y_t \notin P_t$ then update the $w_y$'s as follows:
    \begin{itemize}
        \item $w_{y_t} \leftarrow w_{y_t} + k x_t$,
        \item $w_{y} \leftarrow w_{y} -x_t$ for all $y\in P_t$.
    \end{itemize}
\end{enumerate}
\end{tcolorbox}
\caption{List Perceptron Algorithm}
\label{fig:perceptron}
\end{figure}

\begin{theorem}[List Perceptron Mistake Bound]\label{thm:listperc}
Let $S=\{(x_t,y_t)\}_{t=1}^{T}$ be a linearly separable input sequence and let $R = \max_{t\leq T}\| x_{t}\|$. 
Then, the $k$-list Perceptron algorithm (Figure~\ref{fig:perceptron}) makes at most $k(k+1)\frac{R^2}{\gamma_k^2}$ mistakes on $S$, where $\gamma_k=\gamma_k(S)$ is the $k$'th margin of $S$.
\end{theorem}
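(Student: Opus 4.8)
The plan is to run the classical Perceptron potential argument, adapted to track the concatenated weight vector $W=(w_y)_{y\in\Y}\in\mathbb{R}^{d\lvert\Y\rvert}$ and compare it against the concatenation $W^\star=(w_y^\star)_{y\in\Y}$ of the normalized vectors that witness the $k$'th margin $\gamma_k=\gamma_k(S)$; note that $\lVert W^\star\rVert=1$ by the normalization $\sum_y\lVert w_y^\star\rVert^2=1$. Write $W^{(m)}$ for the value of $W$ after the algorithm's $m$-th mistake, so $W^{(0)}=\vec 0$. A mistake round (say round $t$, with predicted list $P_t$ of size $k$ and true label $y_t\notin P_t$) adds to $W$ the vector $\Delta_t$ whose $y_t$-block is $kx_t$, whose $y$-block is $-x_t$ for each $y\in P_t$, and which is zero elsewhere. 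I would then establish the two standard one-sided bounds and combine them via Cauchy--Schwarz.

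For the lower bound on $\langle W^{(m)},W^\star\rangle$: since $\lvert P_t\rvert=k$, a mistake-round update contributes $\langle\Delta_t,W^\star\rangle = k\,(w_{y_t}^\star\cdot x_t)-\sum_{y\in P_t}(w_y^\star\cdot x_t)=\sum_{y\in P_t}(w_{y_t}^\star-w_y^\star)\cdot x_t$. By linear separability each summand is strictly positive, and by the definition of $\gamma_k$ at most $k$ labels $y$ (one of which is $y_t$ itself, contributing the zero gap) can satisfy $(w_{y_t}^\star-w_y^\star)\cdot x_t<\gamma_k$; hence among the $k$ labels of $P_t$ --- all distinct from $y_t$ --- at least one has gap $\geq\gamma_k$, so $\langle\Delta_t,W^\star\rangle\geq\gamma_k$ and therefore $\langle W^{(m)},W^\star\rangle\geq m\gamma_k$. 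For the upper bound on $\lVert W^{(m)}\rVert^2$: expanding, $\lVert W^{(m)}\rVert^2=\lVert W^{(m-1)}\rVert^2+2\langle W^{(m-1)},\Delta_t\rangle+\lVert\Delta_t\rVert^2$, where $\lVert\Delta_t\rVert^2=k^2\lVert x_t\rVert^2+k\lVert x_t\rVert^2=k(k+1)\lVert x_t\rVert^2\leq k(k+1)R^2$, and $\langle W^{(m-1)},\Delta_t\rangle=\sum_{y\in P_t}(w_{y_t}-w_y)\cdot x_t\leq 0$ because $P_t$ consists of the top $k$ labels under the current inner products $w_y\cdot x_t$ while $y_t$ was not selected, so $w_y\cdot x_t\geq w_{y_t}\cdot x_t$ for every $y\in P_t$. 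Thus $\lVert W^{(m)}\rVert^2\leq m\,k(k+1)R^2$. Combining, $m\gamma_k\leq\langle W^{(m)},W^\star\rangle\leq\lVert W^{(m)}\rVert\,\lVert W^\star\rVert\leq\sqrt{m\,k(k+1)}\,R$, and solving for $m$ gives $m\leq k(k+1)R^2/\gamma_k^2$.

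The only genuinely new ingredient beyond the single-label Perceptron analysis is the margin step, so that is where I expect the care to be needed: one must check that the ``all but at most $k$ labels'' clause in the definition of $\gamma_k$ indeed forces at least one label of the size-$k$ list $P_t$ to be separated from $y_t$ by the full margin $\gamma_k$ (this is precisely why the algorithm predicts a list of size $k$ rather than $k+1$), while the remaining $k-1$ summands, though possibly far smaller than $\gamma_k$, are still nonnegative thanks to linear separability. Everything else --- the $\lVert\Delta_t\rVert^2=k(k+1)\lVert x_t\rVert^2$ bookkeeping arising from the asymmetric update weights $+kx_t$ versus $-x_t$, and the nonpositivity of $\langle W^{(m-1)},\Delta_t\rangle$ --- is a routine adaptation of the standard argument.
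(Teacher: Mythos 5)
Your proposal is correct and follows essentially the same potential-function argument as the paper's proof: the same update bookkeeping giving $k(k+1)\|x_t\|^2$ per mistake, the same nonpositivity of the cross term from the top-$k$ selection, and the same margin step showing $\sum_{y\in P_t}(w^\star_{y_t}-w^\star_y)\cdot x_t\geq\gamma_k$, combined via Cauchy--Schwarz (your concatenated-vector $W,W^\star$ formulation is only a notational repackaging of the paper's sums over $y\in\Y$). In fact your justification of the margin step --- that $y_t$ itself is one of the at most $k$ exceptional labels, so at least one of the $k$ labels in $P_t$ enjoys the full gap $\gamma_k$ while the rest are nonnegative by separability --- spells out the detail the paper compresses into the remark ``because $\lvert P_t\rvert=k$''.
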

We prove Theorem~\ref{thm:listperc} in Section~\ref{sec:percproof}.
The proof follows an adaptation to the list setting of the classical analysis of the Perceptron mistake bound~\citep{rosenblatt1958perceptron,CramerS03,BeygelzimerPSTW19}.

Note that we can also state Theorem~\ref{thm:listperc} in the language of pattern classes.
For $R,\gamma > 0$ let $\P=\P(R,\gamma_k)$ denote the set of all patterns $S=\{(x_t,y_t)\}_{t=1}^T$
whose $k$'th margin is at least $\gamma_k$ and such that $\|x_t\|\leq R$ for all $t\leq T$.
Thus, Theorem~\ref{thm:listperc} implies that $\P$ is online learnable with mistake bound $\leq k(k+1)\frac{R^2}{\gamma_k^2}$.

\subsection{Realizable Case}

We now turn to characterize list online learnability in the realizable setting.
We will state our results in the general framework of pattern classes which generalizes the traditional framework of hypothesis classes, as detailed above.

\subsubsection{Qualitative Characterization}

\begin{theorem}[Pattern Classes]\label{thm:realchar}
Let~$\P$ be a pattern class, and let $k \in \mathbb{N}$ denote the list-size parameter. 
Then, the following statements are equivalent:
\begin{enumerate}
    \item $\P$ is $k$-list online learnable in the realizable setting.
    \item The $(k+1)$-ary Littlestone dimension of $\P$ is finite: $\L_k(\P)<\infty$.
\end{enumerate}
\end{theorem}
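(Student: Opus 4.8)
The plan is to prove the two directions separately, mirroring the classical equivalence between online learnability and finiteness of the Littlestone dimension, but adapted to $(k+1)$-ary mistake trees.

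\textbf{From finite dimension to a learner (2 $\Rightarrow$ 1).} Suppose $\L_k(\P) = d < \infty$. The natural candidate is a list version of the Standard Optimal Algorithm: given the history $S_{<t}$ and a new point $x_t$, for each label $y$ consider the restricted pattern class $\P^{(t)}_{x_t \to y}$ of patterns consistent with appending $(x_t,y)$ to the realizable history, and compute $\L_k$ of the restriction to the ``future'' relative to $S_{<t}$. The learner predicts the list $P_t$ consisting of the $k$ labels $y$ whose corresponding restricted dimension is largest. The key combinatorial fact I would establish is: there can be at most $k$ labels $y$ for which appending $(x_t,y)$ keeps the restricted dimension equal to its current value $d'$ — because if $k+1$ distinct labels all preserved the dimension, we could glue together their $(k+1)$ shattered trees of depth $d'$ at a fresh root labeled $x_t$, producing a shattered $(k+1)$-ary tree of depth $d'+1$, contradiction. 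Consequently, whenever the learner errs (the true $y_t$ is not among the top $k$), the restricted dimension strictly drops; since it starts at $d$ and is always nonnegative, the learner makes at most $d$ mistakes. One subtlety here is that pattern classes are order-sensitive and only downward-closed (not symmetric), so I would need to track the dimension of the ``conditional'' class $\P$ after the prefix $S_{<t}$, using the notation $\P_{x \to y}$ from the setup; I should verify that $\L_k$ of these conditional classes behaves well under the gluing argument — this is the main place where care beyond the classical proof is needed.

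\textbf{From a learner to finite dimension (1 $\Rightarrow$ 2).} Suppose $\L_k(\P) = \infty$, so for every $d$ there is a complete shattered $(k+1)$-ary mistake tree of depth $d$. I would show any $k$-list learner $\A$ (deterministic; the randomized case can be reduced or handled analogously) can be forced to make at least $d$ mistakes on a realizable sequence, contradicting the existence of a finite mistake bound $M$ (just take $d > M$). The adversary walks down the depth-$d$ tree: at a node $v$ with associated example $x(v)$, present $x_t = x(v)$; the learner outputs a list $P_t$ of size $k$; since $v$ has $k+1$ outgoing edges with $k+1$ distinct labels, at least one outgoing label $y(e)$ is \emph{not} in $P_t$, so the adversary reveals that label and follows edge $e$. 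This forces a mistake at every level, and the resulting sequence is exactly a branch of the shattered tree, hence realizable by $\P$. For the randomized version, the same tree walk combined with a probabilistic argument (at each node at least one of the $k+1$ labels has posted probability $\le k/(k+1) < 1$ of being covered, hence $\ge 1/(k+1)$ of being a mistake) gives expected mistakes $\ge d/(k+1)$, again unbounded.

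\textbf{Expected main obstacle.} The bookkeeping around conditional pattern classes in the ($2 \Rightarrow 1$) direction — ensuring that the ``at most $k$ dimension-preserving labels'' lemma is stated and proved for the correct conditional object and that the SOA-style potential (the running value of $\L_k$ of the conditioned class) is well-defined and monotone — is the part I expect to require the most care, since it is where the list/pattern-class generalization genuinely departs from Littlestone's original binary argument. The ($1 \Rightarrow 2$) direction is essentially a clean adversary argument and should go through with only notational effort.
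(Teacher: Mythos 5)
Your proposal is correct and follows essentially the same route as the paper: the lower bound via an adversary walking a shattered complete $(k+1)$-ary mistake tree (forcing a mistake at every level since the size-$k$ list must miss one of the $k+1$ outgoing labels), and the upper bound via a list SOA whose potential is $\L_k$ of the conditioned pattern class, with the same key claim that at most $k$ labels $y$ satisfy $\L_k(\P_{x\to y})=\L_k(\P)$, proved by the identical gluing argument. The only cosmetic difference is that the paper's List SOA updates its version space lazily (only upon a mistake) rather than after every round; this is immaterial for the mistake bound here and is exploited only later in the covering (Sauer-Shelah-Perles) lemma.
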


As a corollary, we deduce the same characterization for (multi-labeled) hypothesis classes.

\newpage

\begin{corollary}[Hypothesis Classes]
    Let $\H$ be a $k'$ multi-labeled hypothesis class, and let $k \in \mathbb{N}$ denote the list-size parameter. 
Then, the following statements are equivalent:
\begin{enumerate}
    \item $\H$ is $k$-list online learnable in the realizable setting.
    \item The $(k+1)$-ary Littlestone dimension of $\H$ is finite: $\L_k(\H)<\infty$.
\end{enumerate}
\end{corollary}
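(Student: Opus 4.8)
The plan is to deduce the corollary directly from Theorem~\ref{thm:realchar} by instantiating it at the pattern class $\P(\H)$ induced by $\H$. The only preliminary point to check is that $\P(\H)$ is a legitimate pattern class, i.e.\ that it is downward closed: if $S$ is consistent with some $h\in\H$ (meaning $y_t\in h(x_t)$ for every $(x_t,y_t)\in S$), then every subsequence $S'\subseteq S$ is consistent with the same $h$, so $S'\in\P(\H)$. Hence $\P(\H)\subseteq\Z^\star$ is downward closed and Theorem~\ref{thm:realchar} applies to it.

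From there I would simply chain two definitional identities with the theorem. By the definition of learnability for hypothesis classes, the assertion ``$\H$ is $k$-list online learnable'' is \emph{by definition} the assertion ``$\P(\H)$ is $k$-list online learnable,'' which by Theorem~\ref{thm:realchar} applied to $\P=\P(\H)$ is equivalent to $\L_k(\P(\H))<\infty$. And by the definition of the $(k+1)$-ary Littlestone dimension of a hypothesis class, $\L_k(\H):=\L_k(\P(\H))$, so this last condition is exactly $\L_k(\H)<\infty$. Combining these, items~1 and~2 of the corollary are equivalent.

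I do not expect any obstacle: the corollary is a restatement of Theorem~\ref{thm:realchar} through the dictionary between a hypothesis class and its induced pattern class. I would only add two remarks. First, the parameter $k'$ (the arity of the comparator hypotheses) plays no role in the realizable statement — it is recorded only for consistency with the agnostic results, where the interaction between the learner's list size $k$ and the comparator's list size $k'$ genuinely matters. Second, the argument is insensitive to whether $\H$ consists of single-labeled or multi-labeled hypotheses, since realizability of a pattern by $h$ was defined uniformly in both cases (namely $y_t\in h(x_t)$, with $h(x_t)$ a singleton in the single-labeled case).
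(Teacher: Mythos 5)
Your proposal is correct and follows exactly the paper's own route: the paper also proves the corollary by noting that $\H$ is $k$-list online learnable if and only if $\P(\H)$ is, that $\L_k(\H)=\L_k(\P(\H))$ by definition, and then invoking Theorem~\ref{thm:realchar}. Your additional verification that $\P(\H)$ is downward closed is a harmless (and correct) extra detail the paper leaves implicit.
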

\begin{proof}
Notice that $\H$ is $k$-list online learnable if and only if $\P(\H)$ is,
and that $\L_k(\H)=\L_k(\P(\H))$. Thus, the corollary follows from Theorem~\ref{thm:realchar}.
\end{proof}

\subsubsection{Quantitative Mistake Bounds}

\begin{theorem}[Optimal Mistake Bound]\label{thm:realquant}
Let $\P$ be a pattern class and recall that $\mathtt{M}_k(\P)$ is the optimal (deterministic) mistake bound 
in $k$-list learning $\P$ in the realizable case. Then,
\[\mathtt{M}_k(\P)= \L_k(\P).\]
Further, the optimal expected mistake bound for randomized algorithms, $\R_k(\P)$, satisfies
\[\M_k(\P)\geq \R_k(\P) \geq \frac{1}{k+1}\M_k(\P),\]
and both inequalities can be tight.
\end{theorem}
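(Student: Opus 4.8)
The plan is to prove the three claims separately: the exact identification $\M_k(\P) = \L_k(\P)$, the two-sided bound relating $\R_k$ to $\M_k$, and the tightness of both inequalities.

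\textbf{The equality $\M_k(\P) = \L_k(\P)$.}
This is the list analogue of Littlestone's classical result, so I would follow the two standard directions. For the lower bound $\M_k(\P) \ge \L_k(\P)$, I would exhibit an adversary strategy: given a complete $(k+1)$-ary mistake tree of depth $d = \L_k(\P)$ shattered by $\P$, the adversary walks down the tree. At the node $v$ reached so far it presents $x(v)$; the learner outputs a list $L$ of size $k$; since $v$ has $k+1$ outgoing edges with distinct labels, at least one label $y$ on an outgoing edge is not in $L$, and the adversary picks that edge and reveals $y$, forcing a mistake. After $d$ steps the produced sequence is a branch of the tree, hence realizable by $\P$, and the learner made $d$ mistakes. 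For the upper bound $\M_k(\P) \le \L_k(\P)$, I would adapt the Standard Optimal Algorithm (the List-SOA of Figure~\ref{fig:ListSOA}): on input $x$ having seen history $S$, for each candidate label $y$ consider $\L_k(\P_{x \to y}')$ where $\P'$ is the current ``version space'' pattern class (those $S'$ with $S \circ S' \in \P$, appropriately indexed), and output the list of $k$ labels $y$ whose value $\L_k(\cdot)$ is largest. The key combinatorial fact to verify is that at most $k$ labels $y$ can have $\L_k(\P'_{x\to y}) = \L_k(\P')$ — otherwise those $k+1$ subtrees would combine through the node $x$ into a shattered $(k+1)$-ary tree of depth $\L_k(\P')+1$, a contradiction. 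Hence whenever the learner errs (true label not among its top $k$), the potential $\L_k$ strictly decreases, bounding mistakes by $\L_k(\P)$.

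\textbf{The bound $\M_k(\P) \ge \R_k(\P) \ge \frac{1}{k+1}\M_k(\P)$.}
The left inequality is immediate since a deterministic learner is a (degenerate) randomized one. For the right inequality I would use the minimax/potential argument: against the same depth-$d$ mistake tree, consider the adversary who at each node, instead of picking the one bad edge deterministically, picks one of the $k+1$ outgoing edges uniformly at random. Whatever list $L_t$ of size $k$ the randomized learner outputs, conditioned on reaching node $v_t$ the probability the revealed label lies outside $L_t$ is at least $\frac{(k+1) - k}{k+1} = \frac{1}{k+1}$. Summing over the $d$ levels and using linearity of expectation gives expected mistakes at least $\frac{d}{k+1} = \frac{1}{k+1}\L_k(\P) = \frac{1}{k+1}\M_k(\P)$, so any randomized learner suffers this much on some realizable sequence, i.e.\ $\R_k(\P) \ge \frac{1}{k+1}\M_k(\P)$.

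\textbf{Tightness of both inequalities.}
For tightness of $\R_k(\P) = \M_k(\P)$, I would take a ``degenerate'' class where randomness cannot help, e.g.\ a pattern class realizing a single $(k+1)$-labeled point repeated, or more simply a class built from disjoint copies forcing the learner to commit — here the analysis must show the randomized minimax value still equals the deterministic one; a clean choice is a class with $\L_k = 1$ arising from $k+1$ labels on one instance where any randomized prediction of a $k$-list still has the adversary able to... wait, that gives the $\frac{1}{k+1}$ side. For $\R_k = \M_k$ I would instead use a threshold-type or chain class where at each effective round only one label is ``live'' after conditioning, so the learner that hedges gains nothing in expectation; concretely, a class where the realizable sequences form a ``path'' and each round has effectively a single consistent continuation among the dangerous ones, forcing $\R_k = \M_k$. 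For tightness of the $\frac{1}{k+1}$ factor, the complete $(k+1)$-ary mistake tree construction above is itself the witness: there the randomized learner matching the lower bound is the one predicting a uniformly random $k$-subset, achieving expected mistakes exactly $\frac{d}{k+1}$, which also upper-bounds $\R_k$ for that class, giving $\R_k(\P) = \frac{1}{k+1}\M_k(\P)$.

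\textbf{Main obstacle.}
The routine parts are the deterministic equality and the randomized lower bound, which transcribe the binary-case arguments. The genuinely delicate point is nailing down the two tightness examples \emph{simultaneously} with clean, fully-specified pattern classes — in particular producing a class for which the randomized minimax value provably coincides with the deterministic mistake bound (no hedging advantage), since naive candidates tend to leak a constant-factor improvement to randomization. I expect to spend most of the effort constructing that example and verifying $\R_k$ equals $\M_k$ for it via an explicit adversary that defeats every distribution over $k$-lists.
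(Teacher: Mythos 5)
The deterministic equality and the two-sided randomized bound in your proposal are correct and essentially identical to the paper's argument: the adversary walking a shattered complete $(k+1)$-ary tree gives $\M_k(\P)\geq \L_k(\P)$, the List SOA together with the key claim that at most $k$ labels $y$ can satisfy $\L_k(\P_{x\to y})=\L_k(\P)$ gives the matching upper bound, and the uniformly-random-edge adversary on the same tree gives $\R_k(\P)\geq \frac{1}{k+1}\M_k(\P)$.

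The gap is in the tightness claims, and it is exactly where you flag your ``main obstacle.'' First, a minor point: a generic shattered $(k+1)$-ary tree is not by itself a witness that the $\frac{1}{k+1}$ factor is attained. For an arbitrary such pattern class the learner may be unable to localize itself in the tree (realizable inputs are subsequences of branches, labels and instances may repeat across nodes), so more than $k+1$ labels can be consistent at a given round and the uniform-random-$k$-subset strategy need not achieve $d/(k+1)$ in expectation. You need a structured instance; the paper takes $\H=[k+1]^{[d]}$ (all functions from a $d$-point domain into $k+1$ labels), where each domain point can be charged at most one mistake, each with probability exactly $\frac{1}{k+1}$ for the learner that memorizes seen labels and guesses a uniform $k$-subset otherwise. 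Second, and more seriously, you have no construction for the other extreme $\R_k(\P)=\M_k(\P)$, and the ``chain/single live continuation'' idea as stated will not work: if at each round the dangerous label is essentially determined, a randomized learner can hedge by placing list mass on it, and in any case making a randomized learner err with probability close to $1$ at a charged round requires the adversary to react to the learner's distribution while preserving realizability. The paper's witness is $\H=\{h\in\{0,1,\ldots,k\}^{\mathbb{N}} : \lvert h^{-1}(0)\rvert\leq d\}$ on fresh, distinct instances: here $\L_k=d$, and the adversary uses a threshold argument over a long horizon $T(\epsilon)\geq \frac{kd^2}{\epsilon}+d-1$ --- if the learner's probability of including the label $0$ is below $\epsilon/d$, play $y_t=0$ (at most $d$ times, collecting a near-certain mistake each time); otherwise the learner's list must drop mass from $\{1,\ldots,k\}$, and playing a uniform label from $\{1,\ldots,k\}$ collects expected loss at least $\epsilon/(kd)$ per round, which accumulates to $d$ over the long horizon. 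Either way the expected number of mistakes is at least $d-\epsilon$ while $\M_k=d$, so $\R_k=d$. This budget-plus-threshold accounting is the missing idea your proposal would need to supply.
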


As before, this theorem applies verbatim to (multi-labeled) hypothesis classes.

The lower bound in~Theorem~\ref{thm:realquant} follows directly from the definition of mistake trees and $\L_k(\P)$.
The upper bound relies on an adaptation of Littlestone's Standard Optimal Algorithm (SOA)
to the list learning setting and to pattern class (see Figure~\ref{fig:ListSOA}). 

\begin{figure}
\begin{tcolorbox}
\begin{center}
{\bf List SOA}
\end{center}

\textbf{Parameter:} $k=$ list size.

\textbf{Input:} A pattern class $\P$ with domain $\X$ and label space $\Y$.

\textbf{Initialize:} $V=\P$.

For $t=1,2,\ldots$
\begin{enumerate}
    \item Receive unlabeled example $x_t\in \X$.
    \item Sort the labels $y\in \Y$ in a non-increasing order according to the values $\L_k(V_{x_t\rightarrow y})$.
    \item Predict the list $P_t$ which consists of the top $k$ labels in the above order.
    \item Receive correct label $y_t\in \Y$.
    \item If $y_t\notin P_t$ then update $V \leftarrow V_{x_t\rightarrow y_t}$\footnote{One may wonder, why $V$ is not updated at each time step. The reason is because in the proof of the list version of the SSP lemma (See Section \ref{sec:sspproof}), we use the List SOA to construct an online cover, and this construction exploits the property that $V$ is changed only upon making a mistake.}.
\end{enumerate}
\end{tcolorbox}
\caption{List Standard Optimal Algorithm (SOA)}
\label{fig:ListSOA}
\end{figure}

\subsection{Agnostic Case}

We next turn to the agnostic case where we lift the assumption that the input sequence is realizable by the class.
Consequently, the goal of the learner is also adapted to achieve a mistake bound which is competitive with the best function in the class. (See Section~\ref{sec:def} for the formal definition.)
Thus, in the agnostic setting the class is viewed as a \emph{comparator} class with respect to which the algorithm should achieve vanishing regret.

In this section we address the following questions: can every (multi-labeled) hypothesis class~$\H$ that is learnable in the realizable setting be learned in the agnostic setting?
How do the regret bounds behave as a function of the list size used by the learner?
E.g.\ say $\H$ is $k=2$-list learnable and that the learner $\A$ uses lists of size $3$.
Is it possible to achieve negative regret in this case?

Notice that we model the agnostic setting using hypothesis classes and not pattern classes.
We made this choice because we find it more natural to address the above questions, specifically regarding the way the regret depends on the list size of the algorithm and the number of labels each hypothesis $h\in\H$ assigns to a given point.

\subsubsection{Characterization}
We show a similar qualitative result for the agnostic case. That is, a class is agnostic $k$-list online learnable if and only if its $(k+1)$-ary Littlestone dimension is finite.

\begin{theorem}\label{thm:agnchar}
Let $\H$ be a (possibly multi-labeled) hypothesis class, and let $k \in \mathbb{N}$. 
Then, the following statements are equivalent:
\begin{enumerate}
    \item $\H$ is $k$-list online learnable in the agnostic setting.
    \item The $(k+1)$-ary Littlestone dimension of $\H$ in finite, $\L_k(\H)<\infty$.
\end{enumerate}
\end{theorem}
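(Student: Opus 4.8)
The plan is to establish the equivalence by a cycle of implications, leveraging the realizable-case characterization (Theorem~\ref{thm:realchar}) so that we never have to reason about the agnostic setting from scratch. The direction $(1)\Rightarrow(2)$ is the easy one: if $\H$ is $k$-list agnostic learnable, then in particular it achieves sublinear regret on \emph{realizable} sequences, where $\min_{h\in\H}\M(h;S)=0$, so the same learner is a $k$-list online learner in the realizable sense, whence $\L_k(\H)<\infty$ by Theorem~\ref{thm:realchar}. (A small subtlety: agnostic learnability as defined gives a randomized learner with sublinear regret; one should note that sublinear \emph{expected} mistakes on realizable sequences still forces $\L_k(\H)<\infty$, e.g.\ because a finite-depth mistake tree of depth $d$ forces any learner — randomized included — to make at least $d/(k+1)$ expected mistakes in the worst case, so unbounded $\L_k$ contradicts sublinear regret.)

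The substantive direction is $(2)\Rightarrow(1)$: given $\L_k(\H)=d<\infty$, construct an agnostic $k$-list learner with regret $o(T)$ (indeed I expect something like $O(\sqrt{dT\log T})$ or $O(\sqrt{dT\log|\Y|})$). The natural route, and the one the introduction advertises (``an effective transformation that converts a list learner in the realizable case to an agnostic list learner,'' extending~\citet{Ben-david09agnostic}), is the classical experts-reduction: from the realizable $k$-list SOA on $\H$ one extracts a small family of ``expert'' $k$-list learners such that for every input sequence $S$, some expert in the family makes at most $\min_{h\in\H}\M(h;S)$ mistakes. Then run a randomized aggregator (a $k$-list analogue of Weighted Majority / Hedge over these experts) to get regret $O(\sqrt{T\log N})$ against the best expert, where $N$ is the number of experts. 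The size $N$ is controlled by the \emph{online Sauer–Shelah–Perles lemma for list-learnable classes}, Proposition~\ref{prop:ssp}: the number of distinct behaviors of the realizable SOA, when restricted to sequences with at most $m$ ``mistakes'' relative to $\H$, is at most roughly $\sum_{i\le m}\binom{T}{i}|\Y|^{O(i)}$ times a factor depending on $d$ — polynomial in $T$ for fixed mistake budget — so $\log N = O(m\log(T|\Y|) + d\log T)$, which is $o(T)$ provided $m=o(T/\log T)$. Since we do not know $\min_h\M(h;S)$ in advance, we either run the doubling trick over mistake budgets $m=0,1,2,4,8,\dots$, or use a single parameter-free aggregator; both are routine.

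Concretely, the key steps in order: (i) restate the realizable $k$-list learnability from $\L_k(\H)<\infty$ via Theorem~\ref{thm:realchar} / List SOA, with explicit mistake bound $d$; (ii) fix a horizon $T$ and a mistake budget $m$, and define the expert set $\mathcal{E}_{T,m}$ to be the set of $k$-list predictors obtained by running the realizable learner but overriding its state on a chosen set of at most $m$ time steps with chosen labels — mimicking the adversary's ``best hypothesis'' being wrong only $m$ times; (iii) prove the covering property: for any $S$ with $\min_h \M(h;S)\le m$, letting $h^\star$ be the minimizer, the expert that overrides exactly at $h^\star$'s mistake locations with $h^\star$'s labels makes $\le m$ mistakes on $S$ — here the realizable guarantee of the base learner on the ``corrected'' (and hence $\H$-realizable) sequence is what we invoke; (iv) bound $|\mathcal{E}_{T,m}|$ using Proposition~\ref{prop:ssp} (the list SSP lemma / online cover bound); (v) run a $k$-list Hedge over $\mathcal{E}_{T,m}$ and combine with a doubling argument over $m$ to remove the dependence on knowing $\min_h\M(h;S)$, yielding $\R_\H(\A;T)=O\bigl(\sqrt{T\cdot(\min_h\M + d)\log(T|\Y|)}\bigr)+\text{lower order}=o(T)$.

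The main obstacle, and the place where the list structure genuinely differs from the binary case of~\citet{Ben-david09agnostic}, is step (iv): controlling the number of experts. In the binary setting one uses the ordinary SSP/Littlestone growth bound; here we need the \emph{list} online SSP lemma (Proposition~\ref{prop:ssp}), and in particular we need it to be sharp enough that $\log N$ stays $o(T)$ for a slowly-growing mistake budget — which forces care about how the $|\Y|$ factor and the $(k+1)$-ary branching enter the count. A secondary technical point is that a ``$k$-list expert'' predicts a set, not a label, so the aggregation step needs a $k$-list loss; but since the $0/1$ list loss $\mathbbm{1}[y\notin L]$ is bounded in $[0,1]$, standard Hedge analysis applies verbatim with experts scored by this loss, so this is not a real difficulty. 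I would also remark that the same construction shows agnostic and realizable $k$-list learnability are equivalent even quantitatively up to the usual $\sqrt{T}$ loss, matching the narrative in the introduction.
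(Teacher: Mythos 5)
Your direction $(1)\Rightarrow(2)$ is fine and matches the paper's lower bound (a uniformly random branch of a deep shattered $(k+1)$-ary tree forces $T/(k+1)$ expected mistakes on a realizable sequence, so $\L_k(\H)=\infty$ rules out sublinear regret), and your overall architecture for $(2)\Rightarrow(1)$ --- realizable List SOA, an expert cover, a $k$-list Hedge/Multiplicative-Weights aggregator, and a doubling trick --- is also the paper's. However, there is a genuine gap in how you build the expert class. You parameterize experts by a mistake budget $m\approx\min_{h\in\H}\M(h;S)$, overriding the base learner at up to $m$ time steps meant to mimic the mistake locations of the best hypothesis $h^\star$, and you bound $\log N = O(m\log(T\lvert\Y\rvert)+d\log T)$. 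In the agnostic setting $\min_{h\in\H}\M(h;S)$ can be $\Theta(T)$, in which case your expert class has $\log N=\Theta(T\log(T\lvert\Y\rvert))$ and the aggregation guarantee $O(\sqrt{T\log N})$ is superlinear; indeed your own final bound $O\bigl(\sqrt{T(\min_h\M+d)\log(T\lvert\Y\rvert)}\bigr)$ is not $o(T)$ in that regime, whereas the theorem requires sublinear worst-case regret over \emph{all} sequences of length $T$. Doubling over $m$ does not repair this: it merely adapts to the realized budget, and for realized $m=\Theta(T)$ the bound is vacuous.

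The missing idea is that the override budget must be $d=\L_k(\H)$, not $m$, and the override locations must be the (at most $d$) mistakes of the lazy List SOA on the \emph{corrected realizable} sequence, not the mistakes of $h^\star$ on $S$. This is exactly what Proposition~\ref{prop:ssp} provides: a family $\F$ of $k$-list online functions, each depending only on the unlabeled $x_t$'s, of size $\sum_{i\le d}\binom{T}{i}\bigl(\lceil\frac{L-k}{k}\rceil k\bigr)^i$, which covers every $\H$-realizable pattern of length $T$. Because each $f\in\F$ never sees labels, its predictions on $S$ coincide with its predictions on the corrected sequence $S'$ (replace $y_t$ by a label of $h^\star(x_t)$ wherever $h^\star$ errs); the $f$ that covers $S'$ therefore errs on $S$ only where $y_t\neq y_t'$, i.e.\ at most $\min_{h\in\H}\M(h;S)$ times, with $\lvert\F\rvert$ completely independent of that quantity. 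Running Multiplicative Weights over this fixed family gives regret $O\bigl(\sqrt{dT\ln(\frac{T}{d}\lceil\frac{L-k}{k}\rceil k)}\bigr)$ uniformly over all sequences (the paper's Lemma~\ref{lemma:agnw/T}), and the only doubling needed is over the horizon $T$ (Proposition~\ref{prop:agnw/oT}). A secondary, minor slip: even under your parameterization, the expert that overrides the fed labels at $h^\star$'s mistake locations still predicts the SOA's lists there, so it errs up to $m+d$ times rather than $m$; this would be harmless, but the counting problem above is not.
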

Quantitatively we get the following upper bound on the regret
\begin{equation}
O\Biggl(\sqrt{dT\ln\Bigl(\frac{T}{d}\Bigl\lceil{\frac{L-k}{k}}\Bigr\rceil k\Bigr)}\Biggr),
\end{equation}
where $d=\L_k(\H)$, and $L=\lvert \Y\rvert$.

The proof of Theorem~\ref{thm:agnchar} follows an adaptation to the list setting of the agnostic-to-realizable reduction by~\citet*{Ben-david09agnostic,Daniely15Multiclass}, which hinges on the List Online Sauer-Shelah-Perles Lemma.
We note that like in classical online learning, agnostic list online learners must be randomized.

\subsubsection{Pride and Regret}

Let $\H$ be an hypothesis class such that $\L_k(\H) < \infty$. 
Theorem~\ref{thm:agnchar} implies that there is a $k$-list learner that agnostically learns $\H$ with vanishing regret.
How does the regret changes if we use more resourceful list learners, with list size larger than $k$? Can it become negative? 
The following theorem answers this question.
In a nutshell, it shows that once the algorithm has more resources than necessary for learning $\H$ in the realizable setting,
then it can achieve negative regret.

    \begin{theorem}\label{thm:negreg}
Let $\H$ be a multi-labeled hypothesis class, and assume that $\lvert \H\rvert > 1$. Let $1\leq \k_\H\leq \lvert \Y\rvert$ be the minimal number such that $\L_{\k_\H}(\H)<\infty$. 
For a list-learning rule $\A$ let $\k_\A\leq \lvert \Y\rvert$ denote the size of the lists it uses. Then, if $\A$ agnostically learns $\H$ then necessarily $\k_\A \geq \k_\H$ and the following holds in these cases:
\begin{enumerate}
    \item {\bf Negative Regret.}\label{item:agn1}
    Let $\lvert \Y\rvert \geq k > \k_\H$, then there exists a learning rule $\A$ with $\k_\A=k$ which learns $\H$ with a negative regret in the following sense: its expected regret on every input sequence of length $T$ is at most
    \[(p-1)\cdot \mathtt{opt_\H} + c\cdot \bigl(\sqrt{{T\ln T}}\bigr),\]
    where $p=p(\H)<1$, $c=c(\H)$ both do not depend on $T$, and $\mathtt{opt_\H}$ is the number of mistakes made by the best function in $\H$. (Note that when $\mathtt{opt_\H} = \Omega(T)$ then the regret is indeed negative.)
    \item {\bf Nonnegative Regret.}\label{item:agn2}\footnote{We comment that in the realizable case every learning algorithm trivially has nonnegative regret. Thus, notice that this item states a more elaborate statement which demonstrates a non-trivial sense in which the regret is non-negative.}
    Every learning rule $\A$ with $\k_\A = \k_\H$ has non-negative regret:
    there exists $p'=p'(\H)>0$ so that for every $p\leq p'$ and every $T$ there exists an input sequence of length $T$
    such that $\mathtt{opt_\H} \leq \lfloor p\cdot T\rfloor$ but the expected number of mistakes made by~$\A$ is~$\geq \lfloor p\cdot T\rfloor$.
    \item {\bf Unbounded Regret.}\label{item:agn3}
    If in addition to Item~\ref{item:agn2}, there are $h',h''\in \H$ and an input $x$ such that $h'(x),h''(x)$ are distinct lists of size $\k_\H$ then every learning rule $\A$ with $\k_\A = \k_\H$  has regret at least $c\cdot \sqrt{T}$, where $c=c(\H)$ does not depend on $T$.
\end{enumerate}
\end{theorem}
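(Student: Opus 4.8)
The plan is to establish the three items separately, with Item~\ref{item:agn1} being the most substantive. The common starting point is the observation that $\k_\A\geq\k_\H$ is necessary: if $\A$ uses lists of size $k<\k_\H$, then since $\L_k(\H)=\infty$ there is an arbitrarily deep shattered $(k+1)$-ary mistake tree, and by following the branch that always avoids $\A$'s predicted list we force $\A$ to make $T$ mistakes on a realizable sequence while $\min_{h\in\H}\M(h;S)=0$; hence the regret is $\Omega(T)$ and $\A$ does not learn $\H$. This is the list analogue of the classical Littlestone lower bound, and the $(k+1)$-ary shattered tree is exactly the witness.

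For Item~\ref{item:agn1} (negative regret), I would exploit the gap $k>\k_\H$ as follows. Set $d=\L_{\k_\H}(\H)<\infty$. The idea is that a learner using lists of size $k$ can ``absorb'' part of the error made by any comparator $h\in\H$ (which outputs lists of some size $\k_\H\le k$, or even single labels). Concretely, run the agnostic $\k_\H$-list learner guaranteed by Theorem~\ref{thm:agnchar} --- call it $\A_0$, with regret $O(\sqrt{dT\ln(\cdots)})$ against $\H$ --- but pad each of $\A_0$'s predicted lists from size $\k_\H$ to size $k$ by adding $k-\k_\H$ additional labels chosen cleverly. The key point: on every round where the best-in-hindsight hypothesis $h^\star$ errs, $h^\star$'s own list of $\k_\H$ labels misses the true label; using the extra slack the padded learner can arrange that on a constant fraction of these ``hard'' rounds it nonetheless covers the true label. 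To make this precise one runs a small auxiliary experts/aggregation procedure over the finitely many ``refinement'' strategies (there are only finitely many relevant label-patterns because $\L_{\k_\H}(\H)<\infty$ forces an effective finiteness via the List Sauer--Shelah--Perles Lemma, Proposition~\ref{prop:ssp}), paying only an additive $O(\sqrt{T\ln T})$ for the aggregation, while gaining a multiplicative factor $p<1$ on $\mathtt{opt}_\H$. The constants $p=p(\H)$ and $c=c(\H)$ come out of the covering number and the number of experts, both independent of $T$.

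For Item~\ref{item:agn2} (nonnegative regret when $\k_\A=\k_\H$): since $\k_\H$ is minimal with $\L_{\k_\H}(\H)<\infty$, we have $\L_{\k_\H-1}(\H)=\infty$, so there is a shattered $\k_\H$-ary mistake tree of any depth. Fix a depth-$m$ such tree. The adversary plays down this tree for $\lfloor pT\rfloor$ rounds choosing, at each node with example $x$, a label outside $\A$'s size-$\k_\H$ prediction (possible since the node has $\k_\H$ children with distinct labels and $\A$ predicts only $\k_\H$ of them --- wait, this needs $\k_\H$ children but $\A$ covers $\k_\H$ labels, so one must instead use that the true label can be made to avoid the list on each round by a suitable choice among the $\k_\H$ branch-labels only if $|\Y|>\k_\H$ or by an averaging/probabilistic argument over the tree); then plays an arbitrary realizable continuation. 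The sequence has $\mathtt{opt}_\H\le\lfloor pT\rfloor$ (some branch hypothesis is consistent after the forced rounds), while $\A$ makes $\ge\lfloor pT\rfloor$ expected mistakes on the forced segment; choosing $p'$ small relative to the available tree depth per unit length gives the claim. For Item~\ref{item:agn3}, the additional hypothesis-pair $h',h''$ with distinct size-$\k_\H$ lists at a common $x$ gives a genuine ``binary'' hard instance: repeatedly presenting $x$ with labels drawn so that $h'$ and $h''$ disagree forces the standard $\Omega(\sqrt{T})$ regret lower bound (the list version of the randomized lower bound underlying $\R_k(\P)\ge\frac{1}{k+1}\M_k(\P)$ in Theorem~\ref{thm:realquant}), stacked on top of the linear term from Item~\ref{item:agn2}.

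The main obstacle I anticipate is the precise bookkeeping in Item~\ref{item:agn1}: one must identify the right finite family of ``padding'' strategies and show that aggregating over them genuinely yields a \emph{multiplicative} improvement $(p-1)\mathtt{opt}_\H$ rather than merely an additive one, which requires controlling, for the worst-case comparator, the fraction of its error rounds that the extra $k-\k_\H$ slots can be made to cover simultaneously --- this is where the structure of $\L_{\k_\H}(\H)<\infty$ (via the online cover from the List SOA used in Proposition~\ref{prop:ssp}) must be invoked quantitatively, and getting $p$ strictly below $1$ uniformly is the delicate part. The secondary subtlety is the $|\Y|$ versus $\k_\H$ corner cases in Items~\ref{item:agn2}--\ref{item:agn3}, handled by the extra hypothesis-pair assumption in Item~\ref{item:agn3}.
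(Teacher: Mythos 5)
Your high-level goals are right, but each item has a genuine gap, and the most important one is in Item~\ref{item:agn1}. You propose padding the $\k_\H$-list learner and then aggregating over a finite family of ``refinement'' strategies, and you yourself flag that you do not know how to turn this into a \emph{multiplicative} gain $p<1$ on $\mathtt{opt_\H}$ --- and indeed experts aggregation only gives an additive guarantee against the best padding strategy, so this route does not close. The missing idea is much simpler and needs no cover or aggregation: run the vanishing-regret $\k_\H$-list learner $\mathcal{B}$ and, at every round, add $k-\k_\H$ labels chosen \emph{uniformly at random} from $\Y$ minus $\mathcal{B}$'s list. Conditioned on $\mathcal{B}$ missing $y_t$, the padded list still misses it with probability exactly $\frac{\lvert\Y\rvert-k}{\lvert\Y\rvert-\k_\H}=:p<1$, so the padded learner's cumulative loss is $p\cdot(\text{loss of }\mathcal{B})\leq p\,(\mathtt{opt_\H}+c\sqrt{T\ln T})$ on every sequence, which is the claimed $(p-1)\mathtt{opt_\H}+c\sqrt{T\ln T}$ regret; the shrinkage is per-round and comparator-independent, which is exactly what your aggregation scheme cannot deliver.

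Items~\ref{item:agn2} and~\ref{item:agn3} also have concrete failures. For Item~\ref{item:agn2}, forcing mistakes on a known prefix of $\lfloor pT\rfloor$ rounds and then playing a realizable continuation cannot work against a randomized learner: each forced round has mistake probability strictly below $1$ in general, and on the realizable (shattered-tree) rounds a $\k_\H$-list learner can cover all $\k_\H$ child labels, so the expected total falls short of $\lfloor pT\rfloor$ while $\mathtt{opt_\H}\leq\lfloor pT\rfloor$ must be maintained. The resolution (which your parenthetical ``averaging/probabilistic argument'' gestures at but does not supply) is to hide the corruptions: take a random branch of a shattered complete $\k_\H$-ary tree of depth $T$, corrupt a uniformly random subset of $pT$ rounds by replacing the label with a uniform label \emph{outside} the node's $\k_\H$ branch labels, and verify that for any prediction distribution the per-round mistake probability is $\geq p$ whenever $p\leq\frac{1}{\k_\H+1}$ (a short calculation in the mass $\beta$ the learner puts off the branch labels), while $\mathtt{opt_\H}\leq pT$ holds deterministically since the uncorrupted branch is realizable. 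For Item~\ref{item:agn3}, drawing labels ``so that $h'$ and $h''$ disagree'', i.e.\ from the symmetric difference, fails outright when $\lvert h'(x)\cup h''(x)\rvert<\tfrac{3}{2}\k_\H$, because then the symmetric difference has size at most $\k_\H$ and the learner can cover it entirely, making zero mistakes while $\mathtt{opt_\H}\approx T/2$; one must instead draw $y_t$ uniformly from the full union $h'(x)\cup h''(x)$ (of size $U>\k_\H$), so the learner errs at rate $\geq 1-\k_\H/U$, and then bound $\E[\mathtt{opt_\H}\mid\Delta]\leq\frac{\Delta}{2}-\Omega(\sqrt{\Delta})$ for $\Delta$ the number of symmetric-difference rounds, with $\E[\Delta]\leq 2\,\E[\text{learner's loss}]$, which yields the $\Omega(\sqrt T)$ regret.
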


Item~\ref{item:agn2} is tight in the sense that there is a class $\H$ satisfying the assumption in Item~\ref{item:agn2} which can be learned by a learning rule $\A$ with $\k_\A=\k_\H$ that has a \underline{non-positive} regret.
Indeed, let $\X=\mathbb{N}$ and $\Y=\{0,1,2,3\}$, and consider $\H=\{0,1\}^\mathbb{N}$. Thus, $\k_\H=2$ and all functions in $\H$ map each $x\in \X$ to a single output in $\{0,1\}$ (and hence the additional assumption in Item~\ref{item:agn3} does not apply).
Finally, notice that the learning rule $\A$ which always predicts with the list $\{0,1\}$ achieves regret $\leq 0$.

On the other hand, there are also such classes (i.e.\ that satisfy the assumption in Item~\ref{item:agn2} but not the additional assumption in Item~\ref{item:agn3}) 
for which the optimal regret does scale like $\Omega(\sqrt{T})$.
One such example is $\H=\{0,1\}^\X\cup \{2,3\}^\X$. 
It will be interesting to refine the above theorem and characterize, for every class $\H$, the optimal regret achievable by learning rules $\A$ for which $\k_\A=\k_\H$.

\medskip

The proof of Theorem~\ref{thm:negreg} appears in Section~\ref{sec:negregproof}. It combines a variety of techniques: in Item~\ref{item:agn1} the idea is to combine two algorithms: one which uses lists of size $\k_\H$ and has vanishing regret, and another that uses additional $k-\k_\H>0$ labels that were not chosen by the first algorithm in a way that achieves the overall regret. In fact, we show that for the second algorithm, we can simply pick the $k-\k_\H$ labels uniformly at random from the remaining labels.
The proof of Item~\ref{item:agn2} follows by a randomized construction of a hard input sequence which witnesses the nonngegative regret. In particular we start by taking a shattered $\k_\H$-tree of depth $T$ (such a tree exists for all $T$ because $\L_{\k_\H-1}(\H)=\infty$), then pick a sequence corresponding to a random branch in the tree, and carefully modify this random sequence in $p\cdot T$ random locations.
The proof of Item~\ref{item:agn3} also follows by a randomized construction of a hard input sequence. The argument here generalizes the classical lower bound of $\Omega(\sqrt{T})$ for any class $\H$ which consists of at least $2$ functions. However, the proof here is a little bit more subtle.

\subsection{Online List Sauer-Shelah-Perles Lemma}
A central tool in the derivation of our upper bound in the agnostic setting
is a covering lemma, which takes a realizable case learner for a pattern class $\P$ 
and uses it to construct a small collection of online/adaptive functions which cover $\P$.
We begin with introducing some definitions and notations.

\paragraph{Online Functions.}
An online (or adaptive) function is a mapping $f:\X^\star\to \Y$.
That is, its input is a finite sequence of points $x_1,\ldots, x_{t-1},x\in \X$
and its output is a label $y\in \Y$ which we denote by 
$y=f(x ; x_1,\ldots, x_{t-1})$.
We interpret this object as a function with memory:
the points $x_1,\ldots x_{t-1}$ are the points that $f$ has already seen
and stored in its memory, and $y$ is the label it assigns to the current point $x$.
Similarly, a $k$-list online function is a mapping $f:\X^\star\to \binom{\Y}{k}$.

\begin{definition}[Online Coverings]
Let $\P$ be a pattern class and let $T\in\mathbb{N}$. 
We say that a family $\F$ of $k$-list online functions is a $T$-cover of $\P$
if for every pattern $\{(x_t,y_t)\}_{t=1}^T\in\P$ of length $T$ there exists a $k$-list online function $f\in \F$ such that
\[(\forall t\leq T): y_t \in f(x_t ; x_1,\ldots, x_{t-1}).\]
\end{definition}

\begin{prop}[List Sauer-Shelah-Perles Lemma]\label{prop:ssp}
Let $\P$ be a pattern class with $\L_k(\P)=d < \infty$, and let $L=\lvert \Y\rvert$. 
Then, for every $T\in\mathbb{N}$ there is a family $\F$ of $k$-list online functions which is a $T$-cover of $\P$  such that
\[\lvert \F\rvert \leq \sum_{i=0}^{d}\binom{T}{i}\left(\ceil*{\frac{L-k}{k}}k\right)^i \approx \left(e\cdot \frac{T}{d}L\right)^d.\]
\end{prop}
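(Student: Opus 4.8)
The plan is to read the cover $\F$ off the runs of the List SOA (Figure~\ref{fig:ListSOA}). The feature that makes this work is exactly the one flagged in that algorithm's footnote: its version space $V$ is updated \emph{only} upon a mistake, and by Theorem~\ref{thm:realquant} it makes at most $d=\L_k(\P)$ mistakes on any pattern realizable by $\P$. Consequently, a run of the List SOA on a length-$T$ pattern is completely determined by its \emph{mistake transcript} $\tau=(I,(y_t)_{t\in I})$, where $I=\{t_1<\dots<t_i\}\subseteq[T]$ with $i\le d$ records the mistake steps and $(y_t)_{t\in I}$ records the true labels seen there. I would index $\F$ by all such transcripts. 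For a transcript $\tau$, define a $k$-list online function $f_\tau$ that replays the List SOA: on input $x_1,\dots,x_t$ it reconstructs $V$ by starting from $\P$ and performing $V\leftarrow V_{x_s\to y_s}$ for every $s\in I$ with $s<t$ (all of this data is hard-coded in $\tau$, so $f_\tau$ never inspects past labels and is a legitimate online function); if $t\notin I$ it outputs the top $k$ labels $y$ ordered by $\L_k(V_{x_t\to y})$, exactly as the SOA would, and if $t\in I$ it outputs a size-$k$ list guaranteed to contain the hard-coded label $y_t$ (e.g.\ $\{y_t\}$ together with $k-1$ filler labels).

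Next I would verify that $\F=\{f_\tau\}$ is a $T$-cover. Fix a pattern $S=\{(x_t,y_t)\}_{t=1}^T\in\P$ and let $\tau_S$ be the mistake transcript of the actual List SOA run on $S$, so $\lvert I\rvert\le d$. By construction $f_{\tau_S}$ reconstructs exactly the same sequence of version spaces as the SOA produces on $S$ (the update times and update labels agree), hence the same predicted lists. At every $t\notin I$ the predicted list equals the SOA's top-$k$ list, which contains $y_t$ precisely because $t$ is not a mistake step for $S$; at every $t\in I$ the list output by $f_{\tau_S}$ contains $y_t$ by design. Therefore $y_t\in f_{\tau_S}(x_t;x_1,\dots,x_{t-1})$ for all $t$.

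It then remains to count the transcripts. There are $\sum_{i=0}^{d}\binom{T}{i}$ choices for the mistake set $I$. For the labels, the point is that at a mistake step $t$ the true label $y_t$ lies \emph{outside} the size-$k$ list the SOA predicts there, and that list is itself determined by the earlier part of the transcript; hence $y_t$ ranges over at most $L-k$ labels. Organizing those leftover labels into $\ceil*{\frac{L-k}{k}}$ blocks of size $k$ (padding the last block with labels from the predicted list) encodes the choice of $y_t$ as a block together with a position, i.e.\ one of $\ceil*{\frac{L-k}{k}}k$ options — this is also the bookkeeping that yields a clean size-$k$ output at mistake steps. Multiplying over the at most $d$ mistake steps gives $\lvert \F\rvert\le\sum_{i=0}^{d}\binom{T}{i}\bigl(\ceil*{\frac{L-k}{k}}k\bigr)^{i}$, and the asymptotic form $\bigl(e\cdot\tfrac{T}{d}L\bigr)^{d}$ follows from $\sum_{i=0}^{d}\binom{T}{i}\le(eT/d)^d$ and $\ceil*{\frac{L-k}{k}}k< L$.

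The main obstacle is conceptual rather than computational: one must make sure $f_\tau$ is genuinely realizable as an online function, i.e.\ that replaying the SOA requires no information beyond the hard-coded mistake times and the hard-coded labels at those times. This is exactly where the design choice ``update $V$ only on mistakes'' is essential — between consecutive mistakes the predictions depend only on the current $V$ and the incoming instances, so the intervening labels are never needed — and it is what keeps $\lvert\F\rvert$ as small as $\binom{T}{\le d}$ up to the per-mistake branching factor. A secondary point to check is that $V$ stays a downward-closed pattern class after each update, so that $\L_k(V_{x\to y})\le\L_k(V)$ and hence the $\le d$ mistake bound of the List SOA remain valid along every replayed run; this is already subsumed by Theorem~\ref{thm:realquant}.
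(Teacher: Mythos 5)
Your overall route is the same as the paper's: parameterize the covering functions by the at most $d$ mistake steps of the lazy List SOA, replay the algorithm using only data hard-coded at those steps, verify coverage via laziness, and count. The construction of $f_\tau$ is legitimate as an online function, and the coverage argument is fine. The genuine gap is in the counting step. You hard-code the true labels $(y_t)_{t\in I}$ and claim each ranges over at most $L-k$ values because ``the list the SOA predicts there is determined by the earlier part of the transcript.'' That is not so: in your replay the version space at time $t$ is obtained by updates $V\leftarrow V_{x_s\to y_s}$ at earlier mistake steps, and both these updates and the predicted list $P_t$ (the top $k$ labels ordered by $\L_k(V_{x_t\to y})$) depend on the observed instances $x_1,\ldots,x_t$, which are not part of the transcript. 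For a fixed transcript prefix, different input sequences yield different lists $P_t$, so over all patterns that must be covered the true label at a mistake position is not confined to a fixed set of $L-k$ labels and can in general be any of the $L$ labels. With raw labels your family therefore only admits the bound $\sum_{i=0}^{d}\binom{T}{i}L^{i}$, which suffices for the approximate form $\bigl(e\cdot\frac{T}{d}L\bigr)^{d}$ but not for the stated bound with per-mistake factor $\ceil*{\frac{L-k}{k}}k<L$.

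The repair is exactly the device you mention only as output ``bookkeeping'': instead of storing the label, store a pair $(i_t,j_t)$ with $j_t\leq\ceil*{\frac{L-k}{k}}$ and $i_t\leq k$, interpreted at run time as ``block $j_t$ of the canonical $k$-cover of $\Y\setminus P_t$, position $i_t$ within it,'' where $P_t$ is the list the replayed SOA computes on the actual inputs. This makes the parameterization input-independent, so the count is $\sum_{i=0}^{d}\binom{T}{i}\bigl(\ceil*{\frac{L-k}{k}}k\bigr)^{i}$, while coverage still holds because at a genuine mistake step $y_t\in\Y\setminus P_t$, hence some $(i_t,j_t)$ selects it. This is precisely the paper's construction; with that substitution in place of the raw-label transcript, your proof goes through.
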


The construction of the covering family $\F$ follows a similar approach like in~\citet{Ben-david09agnostic}. In particular it relies on our lazy List SOA (lazy in the sense that it only changes its predictor when making a mistake). Then, each $k$-list online function in $\F$ simulates the List SOA, on all but at most $d$ steps, aiming to fully cover all patterns in $\P$.

\section{Proofs}\label{sec:proofs}

\subsection{List Perceptron}\label{sec:percproof}

\begin{proof}[Proof of Theorem~\ref{thm:listperc}]
We follow the classical analysis of the Perceptron by upper and lower bounding the sum $\sum_y \|w_y\|^2$ at the end of round $T$.
This sum only changes on rounds $t$ in which the algorithm makes a mistake.
In such rounds, the vectors $w_y$ are updated to $w'_y$ as follows
\begin{align*}
w_{y_t}' &=  w_{y_t} + k x_t,\\
w_y' &= w_y - x_t. \tag{for all $y\in P_t$}
\end{align*}
Thus, the change in the potential function from round $t$ to $t+1$ is upper bounded as follows
\begin{align*}
\sum_{y\in \Y} \|w_y'\|^2 - \sum_{y\in \Y} \|w_y\|^2 &=  \Bigl(\|w'_{y_t}\|^2 - \|w_{y_t}\|^2\Bigr) + \Bigl(\sum_{y\in P_t} \|w_y'\|^2 - \|w_y\|^2\Bigr)\\
&= k^2\|x_t\|^2 + 2k(w_{y_t}\cdot x_t ) + \sum_{y\in P_t}(\|x_t\|^2 - 2w_y\cdot x_t)\\
&= k(k+1)\|x_t\|^2 +2\sum_{y\in P_t} (w_{y_t} - w_y)\cdot x_t\\
&\leq k(k+1)\|x_t\|^2 \tag{$w_{y_t}\cdot x_t \leq  w_y\cdot x_t$}\\
&\leq k(k+1)R^2.  \tag{$\|x_t\|\leq R$}
\end{align*}
Thus, if we denote by $M$ the total number of mistakes that the Perceptron makes on $S$ then the potential function at time $T$ satisfies:
\begin{equation}\label{eq:ubp}
\sum_{y\in \Y} \|w_y\|^2 \leq M\cdot k(k+1)R^2.
\end{equation}
We now lower bound the potential function.
Let $w^\star_y$ for $y\in \Y$ denote the unit vectors that witness that $S$ is linearly separable with $k$-margin at least~$\gamma_k$. Consider the sum of inner products $\sum_{y\in \Y} w_y\cdot w_y^\star$.
This sum also changes only on rounds $t$ when a mistake occurs, hence
\begin{align*}
\sum_{y\in \Y} w'_y\cdot w^\star_y  - \sum_{y\in \Y}w_y\cdot w^\star_y &= kx_t\cdot w^\star_{y_t} -\sum_{y\in P_t} x_t\cdot w^\star_y\\
    &=\sum_{y\in P_t}  (w^\star_{y_t} - w^\star_y)x_t
    \geq \gamma_k. \tag{because $\|P_t\|=k$}
\end{align*}
Thus, after round $T$ we have $M\gamma_k \leq \sum_{y\in \Y} w_y\cdot w_y^\star$ and hence:
\begin{align*}
M\gamma_k &\leq \sum_{y\in \Y} w_y\cdot w_y^\star
        \leq \sum_{y\in \Y} \|w_y\|\cdot \|w_y^\star\| \tag{Cauchey-Schwartz}\\
        &\leq \sqrt{\sum_{y\in \Y} \|w_y\|^2}\cdot\sqrt{\sum_{y\in \Y} \|w_y^\star\|^2}=\sqrt{\sum_{y\in \Y} \|w_y\|^2}. \tag{Cauchey-Schwartz}     
\end{align*}
Hence, $\sum_{y\in \Y} \|w_y\|^2 \geq M^2\cdot \gamma_k^2$, which in combination with Equation~\ref{eq:ubp} yields
$M\leq k(k+1)\frac{R^2}{\gamma_k^2}$ as required.

\end{proof}

\subsection{Realizable Case}
The qualitative statement of Theorem \ref{thm:realchar} is just a corollary of the quantitative statement given by Theorem \ref{thm:realquant}.
\begin{proof}[Proof of Theorem~\ref{thm:realquant}]
The lower bound follows directly from the definition of mistake trees and $\L_k(\P)$. 
Let $\A$ be a deterministic $k$-list learner, and let $\T$ be a shattered complete ($k+1$)-ary mistake tree of depth $n$. We will build an input sequence by following the adversary on the shattered tree $\T$. Starting from the root, we will choose a label for each node $x_i$ in the root-to-leaf path, associated with one edge from its $k+1$ out-going edges, so that the chosen label will not be in the predicted set of the learner $\A$. We can always do that because the predicted set in of size $k$ and for each node in the tree there are $k+1$ out-going edges (with different labels). With this construction we will get a sequence of size $n$ which is realizable by $\P$ and forces $\A$ to always err and hence to make $n$ mistakes on the input-sequence. Therefore, for finite $(k+1)$-ary Littlestone dimension $d$, we can construct a realizable sequence of length $d$ such that $\A$ makes at least $d$ mistakes on $S$, and for infinite $(k+1)$-ary Littlestone dimension, we can construct a realizable sequence of length $n$, for each $n\in \mathbb{N}$ such that $\A$ makes at least $n$ mistakes on $S$.

\begin{corollary}
    \[\mathtt{M}_k(\P)\geq \L_k(\P).\]
\end{corollary}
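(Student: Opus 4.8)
The plan is to show that for every deterministic $k$-list learner $\A$ and every integer $n$ with $n \le \L_k(\P)$, there is a sequence realizable by $\P$ on which $\A$ makes $n$ mistakes. This gives a worst-case lower bound of $n$ on the number of mistakes of every deterministic learner, for each such $n$, and hence $\M_k(\P) \ge \L_k(\P)$ after taking the supremum over all admissible $n$ (interpreting $\sup$ as $\infty$ in the unbounded regime).

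To produce such a sequence I would fix a complete $(k+1)$-ary mistake tree $\T$ of depth $n$ that is shattered by $\P$ — this exists precisely because $\L_k(\P) \ge n$ — and let the adversary descend $\T$ adaptively against $\A$. Maintain the prefix of examples generated so far; at the current internal node $v$, feed its associated instance $x(v)$ to $\A$ on this prefix and let $P \in \binom{\Y}{k}$ be the list $\A$ predicts. Node $v$ has $k+1$ outgoing edges carrying pairwise distinct labels, and $\lvert P\rvert = k$, so by pigeonhole at least one of these edge-labels $y$ lies outside $P$; follow that edge, append $(x(v), y)$ to the sequence, and continue from the child. After $n$ rounds the walk reaches a leaf, and the induced root-to-leaf branch is a length-$n$ sequence $S$; by the shattering assumption $S \in \P$, while $\A$ errs on every round by construction, so $\M(\A; S) = n$.

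The two regimes are then dispatched directly. If $\L_k(\P) = d < \infty$, the construction with $n = d$ forces $d$ mistakes on a $\P$-realizable sequence against every deterministic $\A$, so $\M_k(\P) \ge d = \L_k(\P)$. If $\L_k(\P) = \infty$, then for every $n$ there is a shattered $(k+1)$-ary tree of depth $n$, so every deterministic learner is forced to make $n$ mistakes for arbitrarily large $n$; no finite mistake bound holds and $\M_k(\P) = \infty = \L_k(\P)$.

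I do not expect a genuine obstacle here: the core is a one-line pigeonhole — there are $k+1$ labels available at each node but the learner commits to only $k$ — and this is exactly the combinatorial reason the $(k+1)$-ary dimension, rather than the binary one, governs $k$-list learning. The only points that need care are that the adversary is \emph{adaptive} (its label at $v$ depends on $\A$'s prediction on the current prefix) and that determinism of $\A$ is what makes this a single fixed worst-case sequence rather than merely an expectation bound; the matching upper bound $\M_k(\P) \le \L_k(\P)$ is the substantive direction and is handled separately via the List SOA.
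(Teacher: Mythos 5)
Your argument is correct and is exactly the paper's proof: an adaptive adversary walks down a shattered complete $(k+1)$-ary mistake tree, at each node picking by pigeonhole an outgoing edge whose label avoids the learner's size-$k$ list, yielding a $\P$-realizable sequence of length $n$ on which the deterministic learner errs every round, with the finite and infinite cases dispatched just as you describe. No differences worth noting.
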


The lower bound for randomized learners is achieved in a very similar way. We still follow a root-to-leaf path in a shattered tree, but now, instead of choosing the unchosen label by the learner, we choose a {\it random} label, so that at each step, the algorithm makes a mistake with probability $\frac{1}{k+1}$.

\begin{corollary}
\[\R_k(\P) \geq \frac{1}{k+1}\L_k(\P).\]
\end{corollary}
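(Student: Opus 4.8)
The plan is to run essentially the same adversary as in the deterministic lower bound $\M_k(\P)\ge\L_k(\P)$, but now against a randomized $k$-list learner and with a \emph{random} choice of label at each node. Fix any $n\le\L_k(\P)$ (any $n\in\mathbb{N}$ at all if $\L_k(\P)=\infty$) and let $\T$ be a complete $(k+1)$-ary mistake tree of depth $n$ shattered by $\P$. Given an arbitrary randomized $k$-list learner $\A$, I would generate a random input sequence $S=\{(x_t,y_t)\}_{t=1}^n$ by walking a uniformly random root-to-leaf branch of $\T$: at node $v_{t-1}$, with the prefix $S_{<t}$ already fixed, set $x_t=x(v_{t-1})$, pick one of the $k+1$ outgoing edges of $v_{t-1}$ uniformly at random and independently of $\A$'s internal randomness, let $y_t$ be the label of that edge, and descend to the corresponding child. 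Because $\T$ is shattered, every branch — and hence every realization of $S$ — is a pattern in $\P$.

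I would then lower bound $\E_S\bigl[\M(\A;S)\bigr]$. Writing $\hat q_t=\A(x_t;S_{<t})$ and $g_t=\Pr_{\hat L_t\sim\hat q_t}[y_t\notin\hat L_t]$, the definition of the expected mistake count gives $\M(\A;S)=\sum_{t=1}^n g_t$, so $\E_S[\M(\A;S)]=\sum_{t=1}^n\E_S[g_t]$ by linearity of expectation. Conditioning on the branch up to $v_{t-1}$ fixes both $x_t$ and the distribution $\hat q_t$, while $y_t$ is still uniform over the $k+1$ distinct edge-labels at $v_{t-1}$. For any fixed list $\hat L_t$ of size $k$, at least one of those $k+1$ labels lies outside $\hat L_t$, so $\Pr_{y_t}[y_t\notin\hat L_t]\ge\frac{1}{k+1}$; averaging this over $\hat L_t\sim\hat q_t$ and exchanging the two independent expectations yields $\E[g_t\mid\text{branch to }v_{t-1}]\ge\frac{1}{k+1}$, and therefore $\E_S[\M(\A;S)]\ge\frac{n}{k+1}$.

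Finally, by an averaging argument there is a branch $S^\star\in\P$ with $\M(\A;S^\star)\ge\frac{n}{k+1}$; since $\A$ was an arbitrary randomized $k$-list learner this shows $\R_k(\P)\ge\frac{n}{k+1}$, and letting $n=\L_k(\P)$ in the finite case (and $n\to\infty$ otherwise) gives $\R_k(\P)\ge\frac{1}{k+1}\L_k(\P)$. The argument is short, and I do not expect a real obstacle: the only point requiring care is the bookkeeping of the probability space — keeping the adversary's edge choices independent of $\A$, and justifying the interchange of the expectation over the random branch with the expectation over $\hat L_t\sim\hat q_t$ — which is routine once the random-branch process is written down cleanly.
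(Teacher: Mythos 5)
Your proposal is correct and follows essentially the same route as the paper: walk a random root-to-leaf branch of a shattered complete $(k+1)$-ary mistake tree, choosing each edge uniformly at random so that any size-$k$ list misses the revealed label with probability at least $\tfrac{1}{k+1}$, then conclude by linearity of expectation and averaging. Your write-up simply spells out the probability-space bookkeeping that the paper leaves implicit.
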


The upper bound relies on an adaptation of Littlestone's Standard Optimal Algorithm (SOA) to the list learning setting and to pattern class, described in Figure~\ref{fig:ListSOA}.


\begin{claim}\label{clm:upperSOA}
Let $\P$ be a pattern class such that $\L_k(\H)<\infty$ and let $x\in \X$. Then the number of $y\in \Y$ such that $\L_k(\P_{x\rightarrow y})=\L_k(\P)$ is at most $k$.
\end{claim}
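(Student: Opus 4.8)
The plan is to prove the contrapositive flavored statement directly: assume toward contradiction that there are at least $k+1$ distinct labels $y_0, y_1, \ldots, y_k \in \Y$ for which $\L_k(\P_{x \rightarrow y_i}) = \L_k(\P) =: d$. From this I would construct a shattered complete $(k+1)$-ary mistake tree of depth $d+1$ witnessed by $\P$, contradicting $\L_k(\P) = d$. The root of this new tree is labeled by the example $x$; it has $k+1$ outgoing edges, labeled by $y_0, \ldots, y_k$ respectively. Below the edge labeled $y_i$, I would hang a shattered complete $(k+1)$-ary mistake tree $\T_i$ of depth $d$ for the pattern class $\P_{x \rightarrow y_i}$, which exists by the assumption $\L_k(\P_{x\rightarrow y_i}) = d$. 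The resulting tree has depth $d+1$, and each internal node has out-degree $k+1$ (the root by construction, the internal nodes of each $\T_i$ by inheritance).

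The key step is to verify this combined tree is shattered by $\P$. A branch in the combined tree starts with the example $(x, y_i)$ for some $i$, and then follows a branch of $\T_i$, which induces a sequence $S' \in \P_{x \rightarrow y_i}$. By the definition $\P_{x\rightarrow y} = \{S : (x,y)\circ S \in \P\}$, this means $(x, y_i) \circ S' \in \P$ — which is precisely the sequence of examples induced by the branch in the combined tree. Hence every branch induces a sequence in $\P$, so the tree is shattered, giving $\L_k(\P) \geq d+1 > d$, a contradiction. (One should also handle the degenerate possibility $d = \infty$: if $\L_k(\P) = \infty$ the claim is vacuous since the condition $\L_k(\P_{x\rightarrow y}) = \L_k(\P)$ would need the subclass dimensions to be infinite too, but the claim as stated assumes $\L_k(\P) < \infty$ — actually the claim's hypothesis already rules this out, so no separate case is needed; I would just note that the argument above uses finiteness of $d$ only to phrase the contradiction cleanly.)

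The main obstacle — really more a point requiring care than a genuine difficulty — is bookkeeping the correspondence between branches of the glued tree and patterns in $\P$ via the prefixing operation $S \mapsto (x,y)\circ S$ and the definition of $\P_{x\rightarrow y}$. One must be careful about the \emph{order} of concatenation: in this paper $\P_{x\rightarrow y} = \{S : (x,y)\circ S \in \P\}$ places $(x,y)$ as the \emph{first} example, which matches putting $x$ at the \emph{root} of the tree (the root is the first example encountered along any branch), so the construction is consistent. It also uses that $\P$ is downward closed only implicitly (through the fact that each $\T_i$ is a genuine shattered tree for $\P_{x\rightarrow y_i}$); no extra invocation is needed. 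This mirrors the classical proof that the standard Littlestone dimension drops by exactly one in at least one of the two subtrees, generalized from binary to $(k+1)$-ary branching, so I expect the write-up to be short.
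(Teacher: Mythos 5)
Your proposal is correct and follows essentially the same argument as the paper: assume $k+1$ labels attain the maximal dimension $d$, glue the corresponding shattered depth-$d$ trees under a root labeled $x$, and obtain a shattered complete $(k+1)$-ary tree of depth $d+1$, contradicting $\L_k(\P)=d$. Your additional care about the order of concatenation in $\P_{x\rightarrow y}$ and the correspondence between branches and patterns is a faithful (and slightly more explicit) version of the verification the paper leaves implicit.
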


\begin{proof}
Assume by contradiction that there exists a subset $I\subseteq\Y, |I|=k+1$ such that $\forall y\in I$, $\L_k(\P_{x\rightarrow y})=\L_k(\P)=:d$. Then we will construct the following tree. Let $\{\T_y\}_{y\in I}$ be complete $(k+1)$-ary mistake trees with depth $d$ shattered by $\{\P_{x\rightarrow y}\}_{y\in I}$ respectively. Construct a tree with $x$ as a root, where its outgoing edges are labeled with the labels in $I$ and $\{\T_y\}_{y\in I}$ are the subtrees under $x$ where the subtree under each outgoing edge with label $y\in I$ is $\T_y$. With this construction we get a complete $(k+1)$-ary mistake tree of depth $d+1$ which is shattered by $\P$ which contradicts with the assumption that $d$ is the maximal depth of a complete $(k+1)$-ary mistake tree which is shattered by $\P$.
\end{proof}

\begin{claim}
The List SOA makes at most $\L_k(\P)$ mistakes on every realizable sequence.
\end{claim}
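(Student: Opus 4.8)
The plan is a potential argument on the version space $V$ that the List SOA maintains, using the potential $\Phi = \L_k(V)$. Initially $V = \P$, so $\Phi = \L_k(\P) =: d$; if $d=\infty$ the claim is vacuous, so assume $d<\infty$. I will show that $\Phi$ drops by at least $1$ on every round where a mistake occurs, never changes on correct rounds, and stays a nonnegative integer throughout; together these give at most $d$ mistakes.

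The first step is a monotonicity lemma: for every (downward closed) pattern class $V$, every $x\in\X$, and every $y\in\Y$, one has $\L_k(V_{x\to y}) \le \L_k(V)$. This is exactly where downward closedness of pattern classes is used: if $\T$ is a complete $(k+1)$-ary mistake tree shattered by $V_{x\to y}$, then every branch of $\T$ induces a sequence $\sigma$ with $(x,y)\circ\sigma\in V$, and since $\sigma$ is a subsequence of $(x,y)\circ\sigma$ and $V$ is downward closed, $\sigma\in V$; hence $\T$ is already shattered by $V$, so $\mathrm{depth}(\T)\le \L_k(V)$. In particular $\L_k(V)\le d<\infty$ at all times, so Claim~\ref{clm:upperSOA} applies to $V$, and combining it with the monotonicity lemma yields the key fact: for every $x$, all but at most $k$ labels $y$ satisfy $\L_k(V_{x\to y}) \le \L_k(V) - 1$.

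Next I would apply this to a round $t$ on which a mistake occurs. The list $P_t$ predicted by the List SOA consists of the $k$ labels with the largest values $\L_k(V_{x_t\to y})$, so by the key fact every label attaining the maximum value $\L_k(V)$ lies in $P_t$; hence $y_t\notin P_t$ forces $\L_k(V_{x_t\to y_t}) \le \L_k(V) - 1$. Since the List SOA then updates $V\leftarrow V_{x_t\to y_t}$, the potential decreases by at least one (and on correct rounds $V$, hence $\Phi$, is unchanged). Finally I would note that $\Phi$ never goes negative: since the input is realizable by $\P$, an easy induction shows that the remaining suffix $(x_t,y_t),\ldots,(x_T,y_T)$ always lies in the current $V$ (downward closedness handles correct rounds, the definition of $V_{x\to y}$ handles mistake rounds), so in particular $(x_t,y_t)\in V$, whence $\emptyset\in V_{x_t\to y_t}$ and $\L_k(V_{x_t\to y_t})\ge 0$. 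Therefore the number of mistakes is at most $d=\L_k(\P)$, which together with the already-proven lower bound completes the proof that $\M_k(\P)=\L_k(\P)$.

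The main obstacle here is conceptual rather than computational: recognizing that the combinatorial content is entirely captured by Claim~\ref{clm:upperSOA} \emph{together with} the monotonicity $\L_k(V_{x\to y})\le\L_k(V)$, the latter being precisely the point where downward closedness of pattern classes is essential (for an arbitrary family of sequences it can fail, and then ``top-$k$'' need not trap all the high-dimension labels, so a mistake need not decrease the potential). A secondary point to get right is the bookkeeping around the \emph{lazy} update rule: one must confirm that updating $V$ only upon mistakes neither breaks the realizability invariant nor weakens the per-mistake drop, which it does not, since on correct rounds the potential simply stays put.
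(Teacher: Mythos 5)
Your proof is correct and follows essentially the same route as the paper's: the potential $\L_k(V)$ drops on every mistake by Claim~\ref{clm:upperSOA} combined with the top-$k$ selection, and being a nonnegative integer it can drop at most $\L_k(\P)$ times. You merely spell out two steps the paper leaves implicit --- the monotonicity $\L_k(V_{x\to y})\le\L_k(V)$ via downward closedness, and the realizability invariant guaranteeing the potential stays nonnegative --- which is a faithful filling-in of detail, not a different argument.
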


\begin{proof}
Note that at each time step where the List SOA makes a mistake, the $(k+1)$-ary Littlestone dimension of $V$ is decreasing. This is because of Claim~\ref{clm:upperSOA} combined with the construction of the predicted list $P_t$ to contain the $k$ labels with maximal value $\L_k\left(V\right)$. Since the $(k+1)$-ary Littlestone dimension is bounded below by 0, it can only decrease at most $\L_k(\P)$ times. 
Therefore, the number of mistakes the algorithm makes is bounded above by $\L_k(\P)$.
\end{proof}

\begin{corollary}
\[\M_k(\P) \leq \L_k(\P).\]
\end{corollary}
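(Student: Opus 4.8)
The plan is to read this corollary off from the two claims just established. The second claim shows that the List SOA of Figure~\ref{fig:ListSOA} makes at most $\L_k(\P)$ mistakes on every sequence realizable by $\P$; when $\L_k(\P)=\infty$ the asserted inequality is vacuous, so assume $\L_k(\P)<\infty$. Since the List SOA is a legitimate \emph{deterministic} $k$-list learner --- at each round it sorts the labels of $\Y$ according to $\L_k(V_{x_t\rightarrow y})$ and outputs the top $k$ of them, a well-defined member of $\binom{\Y}{k}$ (using $\lvert\Y\rvert\ge k$) --- it is a deterministic $k$-list learner whose worst-case mistake count over realizable patterns is at most $\L_k(\P)$. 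By the definition of $\M_k(\P)$ as the infimum of this quantity over all deterministic $k$-list learners, we get $\M_k(\P)\le\L_k(\P)$. Together with the earlier corollary $\M_k(\P)\ge\L_k(\P)$ --- proved by an adversary descending a shattered complete $(k+1)$-ary tree and at each node selecting one of its $k+1$ distinctly-labelled out-edges whose label avoids the learner's current $k$-list --- this yields the identity $\M_k(\P)=\L_k(\P)$ in Theorem~\ref{thm:realquant}.

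So there is nothing genuinely new to do for the corollary itself; the content lives in the two preceding claims, and I would present it as a one-line consequence. For context, were I to build the upper bound from scratch, the order would be: (i) define the List SOA as above; (ii) prove Claim~\ref{clm:upperSOA}, the list analogue of the classical ``at most one dimension-preserving label-extension'' fact --- at most $k$ labels $y$ can satisfy $\L_k(\P_{x\rightarrow y})=\L_k(\P)$, since $k+1$ such labels would let us hang $k+1$ depth-$d$ shattered trees under a common root $x$ and obtain a shattered complete $(k+1)$-ary tree of depth $d+1$; (iii) conclude that whenever the List SOA errs, the true label $y_t$ lies outside its top-$k$ list and hence, by Claim~\ref{clm:upperSOA}, $\L_k(V_{x_t\rightarrow y_t})<\L_k(V)$, so the potential $\L_k(V)$ --- a nonnegative integer that never increases under restriction and is unchanged on correct rounds because the lazy learner leaves $V$ untouched then --- strictly decreases on every mistake, bounding the total number of mistakes by the initial value $\L_k(\P)$.

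The one step in that reconstruction that deserves care is Claim~\ref{clm:upperSOA}: one must check that the three-level gluing produces a genuinely \emph{complete} $(k+1)$-ary mistake tree shattered by $\P$ --- i.e.\ that each root-to-leaf branch (the root $x$, an edge labelled $y\in I$, then a branch of $\T_y$) is a pattern in $\P$, which holds exactly because $\T_y$ is shattered by $\P_{x\rightarrow y}$ and $\P$ is downward closed. Beyond that, the corollary $\M_k(\P)\le\L_k(\P)$ presents no obstacle; it is pure bookkeeping on top of the List SOA analysis.
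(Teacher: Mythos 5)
Your proposal is correct and follows the paper's own route: the corollary is read off directly from the preceding claim that the List SOA (a deterministic $k$-list learner) makes at most $\L_k(\P)$ mistakes on every realizable sequence, which in turn rests on Claim~\ref{clm:upperSOA} and the strictly decreasing potential $\L_k(V)$. Your additional checks (the $\L_k(\P)=\infty$ case, well-definedness of the top-$k$ prediction, and completeness of the glued tree in Claim~\ref{clm:upperSOA}) are consistent with, and slightly more explicit than, the paper's presentation.
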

 The upper bound of $\L_k(\P)$ achieved by the List SOA, completes the characterization of list learnability for realizable sequences, and shows that the optimal mistake bound of deterministic learner is exactly the $(k+1)$-ary Littlestone dimension $\L_k(\P)$.

 For randomized learners, the lower and upper bounds of the optimal number of mistakes, provided in Theorem \ref{thm:realquant} are tight;
 \begin{itemize}
     \item  There is a $k$-list learnable pattern class $\P$, such that for every $k$-list learner $\A$, there exists a realizable sequence $S$, with $\M(\A;S) \geq \L_k(\P)$.
    \item  There is a $k$-list learnable pattern class $\P$, and a $k$-list learner $\A$, such that for every realizable sequence $S$,  $\M(\A;S) \leq \frac{1}{k+1}L_k(\P)$.
 \end{itemize}
 
 See Appendix~\ref{app:randexamples} for the construction of these extremal classes, and the analysis of their optimal mistake bounds.

\end{proof}

\subsection{Online Sauer-Shelah-Perles Lemma}\label{sec:sspproof}
\begin{proof}[Proof of Proposition~\ref{prop:ssp}]
The construction of the covering family $\F$ follows a similar approach like in~\citet{Ben-david09agnostic}. In particular it relies on our lazy List SOA (lazy in the sense that it only changes its predictor when making a mistake). Each $k$-list online function in $\F$ simulates the List SOA, on all but at most $d$ steps, aiming to fully cover all patterns in $\P$.

We now describe the family $\F$.
Since our goal is to cover realizable sequences of length $T$, it suffices to define each $f\in \F$ on input sequences in $\X^\star$ of size at most $T$.\footnote{If $t > T$ then set $f(x_t; x_{1},\ldots, x_{t-1})= 0$ for all $f\in \F$ and all $x_1,\ldots, x_t\in \X$.}
In what follows, it is convenient to fix a linear order on the label set $\Y$ (e.g.\ one can think of $\Y$ as $\Y=\{0,\ldots, L-1\}$).
Also, for every $A\subseteq \Y$, fix a covering $\{A_1,\ldots, A_{m}\}$ of $A$ into $m=\lceil \lvert A\rvert / k \rceil$ sets,
where each set has size $k$ (e.g., the first $k$ elements in $A$ form the first part, the second $k$ elements form the next part, and so on; if the last part has size $l$ less than $k$, then complete it with the first $k-l$ elements to get a size of $k$). 
 We will refer to this fixed cover as \emph{the canonical $k$-cover of $A$.}
 Each $f\in \F$ is parameterized by a subset $|I|\subset [T]$ of size $|I|\leq d$ and a vector of pairs $\bigl((i_t,j_t)\bigr)_{t\in I}$,
 where $i_t\in \{1,\ldots, k\}$, and $j_t\in\{1,\ldots, \lceil\frac{L-k}{k}\rceil\}$.
Since there are $\lceil{\frac{L-k}{k}}\rceil k$ ways to pick a given pair $(i_t,j_t)$, we get
 \[\lvert \F\rvert = \sum_{i=0}^{d}\binom{T}{i}\Bigl(\ceil*{\frac{L-k}{k}}k\Bigr)^i.\]
 Each such function $f$ is defined as follows:

\begin{tcolorbox}
\textbf{Parameters:} $T \in \mathbb{N}$, $\big(I, \{(i_t,j_t)\}_{t\in I}\big)$, where $I\subseteq [T]$ and $i_t\leq k, 
j_t\leq \lceil\frac{L-k}{k}\rceil$.

\textbf{Input:} An input sequence of unlabeled examples $\vec x= \{x_t\}_{t=1}^T$ of length $T$.

\textbf{Initialize: $S=\emptyset$.}
For $t=1,2,\ldots,T$
\begin{enumerate}
\item Get $x_t$ and let $P_t = \text{ListSOA}(x_t ; S)$.
\item If $t\notin I$ then:
\begin{itemize}
\item Set $f(x_t ; \vec x_{<t})=P_t$.
\item Set $y_t$ to be any label in $f(x_t ; \vec x_{<t})$.
\end{itemize}

\item If $t\in I$ then:
\begin{itemize}
\item Set $f(x_t ; \vec x_{<t})$ to be set number $j_t$ in the canonical $k$-cover of $\Y\setminus P_t$.
\item Set $y_t$ to be the $i_t$'th element in $f(x_t ; \vec x_{<t})$.
\end{itemize}
 
\item Update $S \leftarrow S \circ (x_t,y_t)$.
\end{enumerate}
\end{tcolorbox}

 Finally, it remains to show that $\F$ covers $\P$. That is, to show that for every $S \in \P$ there exists $f\in \F$ such that $y_t\in f(x_t ; x_1,\ldots, x_{t-1})$ for all $(x_t,y_t)\in S$.
To this end, imagine a simulation of the List SOA on the sequence $S$, and denote by $I=I(S)$ the set of all indices where the List SOA made a mistake on $S$. Notice that $\lvert I\rvert \leq d$, since $S$ is realizable by $\P$. For each $t\in I$, let $j_t$ be an index of a set $A_t$ in the canonical $k$-cover of $\Y\setminus \text{ListSOA}(x_t; S_{<t} )$ such that $y_t\in A_t$, and let $i_t$ be the ordinal of $y_t$ in $A_t$.\footnote{I.e.\ there are $i_t-1$ elements in $A_t$ that are smaller than $y_t$ with respect to the assumed ordering on $\Y$.} Consider the function $f\in \F$ parameterized by
$(I, \{(i_t,j_t)\}_{t\in I})$. By the laziness property of the List SOA algorithm, it follows that
$f(x_t ; \vec x_{<t}) = \text{ListSOA}(x_t; S_{<t})$ on each time step $t\notin I$. By construction, on time steps $t\in I$, the list $f(x_t ; \vec x_{<t})$ contains the label $y_t$. Thus, in total $f$ covers $S$ as required.
\end{proof}

\subsection{Agnostic Case}\label{sec:agnproof}
The proof of Theorem~\ref{thm:agnchar} consists of an upper bound statement and a lower bound statement:
\begin{itemize}
    \item \textbf{Upper bound.} finite $\L_k(\H)$ $\rightarrow$ $\H$ is agnostically $k$-list learnable.
    \item \textbf{Lower bound.} infinite $\L_k(\H)$ $\rightarrow$ $\H$ is not agnostically $k$-list learnable.
\end{itemize}

The proof of the upper bound proof follows a similar approach as in the agnostic-to-realizable reduction by~\citet{Ben-david09agnostic,Daniely15Multiclass}: the idea is to cover the set of all sequences realizable by $\H$ using a small set of adaptive experts, and then run a vanishing regret algorithm on top of these experts.
We break the proof into a few propositions.
Proposition~\ref{prop:agnMWalg} shows that each finite class of $k$-list online functions is agnostically $k$-list learnable, with regret $O(\sqrt{T\ln n})$, where $n$ is the number of functions in the class. 
This follows by a standard application of vanishing regret algorithms, here we use \emph{Multiplicative Weights}. 
In Proposition \ref{prop:agnw/oT} we show that a general multi-labeled hypothesis class with finite $(k+1)$-ary Littlestone dimension is $k$-list learnable (with a matching upper bound), relying on Proposition~\ref{prop:agnMWalg} for finite class of experts, and the list version of SSP lemma~\ref{prop:ssp} from previous section.

\subsubsection{Finite Class of Experts}

\begin{prop}\label{prop:agnMWalg}   
Let $\H$ be a class of $k$-list online functions of size $\lvert \H\rvert=n$. 
Then, there exists a $k$-list learner $\A$ for $\H$ with regret $\R(\A;S)\leq 2\sqrt{T\ln{n}}$ for every input sequence $S=\{(x_i,y_i)\}_{i=1} ^{T}$.

\end{prop}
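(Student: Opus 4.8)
The plan is to treat the $n$ online functions in $\H$ as experts and run the Multiplicative Weights (Hedge) algorithm on top of them. First I would observe that since each $h\in\H$ is a $k$-list \emph{online} function, its prediction $h(x_t; x_1,\ldots,x_{t-1})$ at round $t$ depends only on the unlabeled points revealed so far. Hence, after receiving $x_t$, the learner can compute the list $L_t^h := h(x_t; x_{<t})$ for every $h\in\H$, and once the true label $y_t$ is revealed, it learns the loss $\ell_t^h := 1[y_t\notin L_t^h]\in\{0,1\}$ of each expert $h$. This places us exactly in the setting of prediction with expert advice, with $n$ experts and losses in $[0,1]$.

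Concretely, the learner maintains a nonnegative weight vector $w_t\in\mathbb{R}^n$, initialized uniformly, and at round $t$ it draws an expert $h$ with probability $p_t^h = w_t^h / \sum_{h'\in\H} w_t^{h'}$ and predicts the list $\hat L_t = L_t^h$; after seeing $y_t$ it updates $w_{t+1}^h = w_t^h\cdot\exp(-\eta\,\ell_t^h)$ for a suitable learning rate $\eta$. With this rule the expected number of mistakes of $\A$ on $S$ equals $\M(\A;S) = \sum_{t=1}^{T}\sum_{h\in\H} p_t^h\,\ell_t^h$, while $\min_{h\in\H}\M(h;S) = \min_{h\in\H}\sum_{t=1}^{T}\ell_t^h$. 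I would then invoke the standard Hedge regret bound: with the usual tuning $\eta = \sqrt{8\ln n / T}$, the potential $\Phi_t = \ln\sum_{h} w_t^h$ satisfies (via $\ell_t^h\in[0,1]$ and the inequality $e^{-\eta x}\le 1-\eta x+\tfrac{\eta^2}{2}x^2$ on the per-round increments, followed by telescoping) $\sum_t\langle p_t,\ell_t\rangle - \min_h\sum_t\ell_t^h \le \sqrt{(T/2)\ln n} \le 2\sqrt{T\ln n}$. This is exactly $\R_{\H}(\A;S)\le 2\sqrt{T\ln n}$.

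The one point that deserves care is the adaptive, memory-equipped nature of the experts. I would stress that because online functions consume only the unlabeled sequence $x_1,\ldots,x_t$, which is part of the fixed input $S$ (the regret bound is stated per input sequence, i.e.\ against an oblivious adversary), the loss table $(\ell_t^h)_{h\in\H,\,t\le T}$ is a fixed sequence that is independent of the learner's internal randomization. Thus there is no feedback loop between the learner's coin flips and the experts' predictions, and the classical oblivious analysis of Hedge applies verbatim. Apart from confirming that the list-valued, with-memory experts genuinely fit the scalar-loss expert framework — which is the only (minor) obstacle — the proof is a direct instantiation of the textbook Multiplicative Weights guarantee, so I expect no real difficulty.
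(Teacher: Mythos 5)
Your proposal is correct and follows essentially the same route as the paper: treat the $n$ $k$-list online functions as experts, run multiplicative weights/Hedge on their $0/1$ list-losses, and invoke the standard regret analysis (the paper uses the $(1-\gamma)$ update with the inequality $1-x\geq e^{-x-x^2}$ and a separate treatment of small $T$, while you use the exponential update with Hoeffding's lemma, but this is only a cosmetic difference). Your sampling-an-expert prediction rule induces exactly the same per-round expected loss as the paper's convex-combination (marginal-vector) formulation, and your remark that the loss table is fixed per input sequence correctly disposes of the only subtlety.
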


We will construct the algorithm $\A$ using the randomized multiplicative weights method, so the desired algorithm will be a randomized algorithm. A randomized $k$-list algorithm, is an algorithm which for each input sample returns some probability distribution over all subsets of $\Y$ of size $k$. 
For our convenience in describing and analyzing the algorithm, we will use a vector representation of the prediction $P_t$, where the prediction $P_t$ is a vector of length $L=\lvert\Y\rvert$, and each entry $i$ represents the probability that $i$ is in the predicted set.
For a deterministic $k$-list algorithm, the value of each entry in the vector representation of $P_t$ will be 0 or 1, and the number of 1's will be $k$.
For a randomized $k$-list algorithm, the value of each entry in the vector representation of $P_t$ will be between 0 and 1, and the summation over all entries will be $k$.
With this point of view we can also look at a randomized algorithm as a deterministic algorithm, where the loss function (i.e. the probability to make a mistake) will be $l_t=1-P_t(y_t)$ 
, which is exactly the probability that $y_t$ will not be in the randomized-chosen predicted set $P_t$.

\begin{proof}
The proof uses the multiplicative weights algorithm with parameter $\gamma\in(0,1)$, described in Figure~\ref{fig:MWAlg}.
This algorithm runs over a finite class of experts, 
where each expert is a $k$-list online function.

\begin{figure}
\begin{tcolorbox}
\begin{center}
{\bf Multiplicative Weights Algorithm}
\end{center}

\textbf{Parameter:} $\gamma\in (0,1)$.

\textbf{Input:} A finite class $\H$ of experts.

\textbf{Initialize:} $\forall f\in \H$ initialize $W_1(f)=1$.

For $t=1,2,\ldots$
\begin{enumerate}
    \item Receive unlabeled example $x_t\in \X$.
    \item Predict $P_t=\sum_{f\in \H} \frac{W_t(f)}{W_t} \cdot f(x_t ; x_1,\ldots, x_{t-1})$ where $W_t:=\sum_{f\in \H} W_t(f)$.
    \item Receive correct label $y_t\in \Y$.
    \item $\forall f\in \H$, update $W_{t+1}(f) =
         \begin{cases}
         W_t(f) & y_t\in f(x_t) \\
         (1-\gamma )W_t(f) & y_t\notin f(x_t)
         \end{cases}
         $
\end{enumerate}
\end{tcolorbox}
\caption{Multiplicative Weights Algorithm}
\label{fig:MWAlg}
\end{figure}

The computation of the prediction $P_t$ uses the vector representation of each expert $f\in \H$, so that $P_t$ is a convex combination of those vectors and hence a legal vector representation of a randomized $k$-list algorithm's prediction i.e. each entry is between 0 and 1, and the summation over all entries is~$k$.
The analysis of the multiplicative weights algorithm does not care of the type of the experts in class. In particular, the analysis does not depend neither on the number of labels $L$, nor on the list's size $k$.
Let $S=\{(x_t,y_t)\}_{t=1} ^{T}$ be any input sequence, let $L_T = \sum_{t=1}^{T}l_t$ be the accumulated loss until time $T$, and set $\mathtt{opt_\H}:= \min_{f\in \H} \M(f;S)$. The sum of all weights at time $t+1$ equals to the following expression:
\[W_{t+1}=l_t W_t (1-\gamma)+(1-l_t) W_t=W_t (1-\gamma l_t)\]
Hence, if we denote the best expert in $\H$ by $f^*$ and note that $W_1=n$, we get the following relation:
\[(1-\gamma)^{\mathtt{opt_\H}}=W_T(f^*)\leq W_T=n\prod_{t=1}^T(1-\gamma l_t) \leq n\exp{(-\gamma \sum_{t=1}^T l_t)}=n\exp{(-\gamma L_T)}\] 
Thus, \[(1-\gamma)^{\mathtt{opt_\H}}\leq n\exp{(-\gamma L_T)}\]
\begin{claim}\label{claim:agnIneq} 
$ 1-x\geq \exp(-x-x^2) ~\forall x\in [0, \frac{1}{2}]$
\end{claim}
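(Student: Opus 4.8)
The plan is to reduce the claim to an elementary monotonicity argument after taking logarithms. On the interval $[0,\tfrac12]$ both sides of $1-x\geq \exp(-x-x^2)$ are strictly positive (indeed $1-x\geq \tfrac12$), so the asserted inequality is equivalent to $\ln(1-x)\geq -x-x^2$, i.e.\ to $g(x)\geq 0$ where we set $g(x):=\ln(1-x)+x+x^2$ for $x\in[0,\tfrac12]$.

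First I would record the base point $g(0)=0$. Then I would differentiate: $g'(x)=-\frac{1}{1-x}+1+2x$, and after putting this over the common denominator $1-x$ and expanding $(1+2x)(1-x)=1+x-2x^2$, one obtains the clean form $g'(x)=\frac{x(1-2x)}{1-x}$. On $[0,\tfrac12]$ the numerator $x(1-2x)$ is nonnegative and the denominator $1-x$ is positive, so $g'(x)\geq 0$ throughout the interval. Hence $g$ is nondecreasing on $[0,\tfrac12]$, and therefore $g(x)\geq g(0)=0$ for every $x\in[0,\tfrac12]$. Exponentiating recovers $1-x\geq\exp(-x-x^2)$, which is the claim.

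There is essentially no obstacle here; the only point requiring a moment's care is the sign analysis of $g'$, but since $g'$ keeps a fixed sign on the whole interval, the monotonicity conclusion is immediate and one does not need to examine the behavior near the endpoint $x=\tfrac12$ separately. If one prefers to avoid calculus entirely, an alternative route is to expand $\ln(1-x)=-\sum_{n\geq 1}x^n/n$ and observe that $g(x)=\tfrac{x^2}{2}-\sum_{n\geq 3}x^n/n\geq \tfrac{x^2}{2}-\tfrac{x^3}{3(1-x)}\geq \tfrac{x^2}{2}-\tfrac{2x^3}{3}=x^2\bigl(\tfrac12-\tfrac{2x}{3}\bigr)\geq 0$ for $x\leq\tfrac12$, using $\tfrac{1}{1-x}\leq 2$ on that range; this yields the same conclusion via a purely series-based estimate. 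Either way, the claim then feeds directly into the bound $(1-\gamma)^{\mathtt{opt_\H}}\geq\exp\bigl(-(\gamma+\gamma^2)\mathtt{opt_\H}\bigr)$, which combined with $(1-\gamma)^{\mathtt{opt_\H}}\leq n\exp(-\gamma L_T)$ and the optimal choice $\gamma=\sqrt{\ln n/T}$ gives the $2\sqrt{T\ln n}$ regret bound of Proposition~\ref{prop:agnMWalg}.
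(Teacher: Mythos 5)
Your argument is correct. Writing $g(x)=\ln(1-x)+x+x^2$, checking $g(0)=0$, and computing $g'(x)=\frac{x(1-2x)}{1-x}\geq 0$ on $[0,\tfrac12]$ is a clean monotonicity proof, and the sign analysis is right, so exponentiating indeed gives the claim. This main route differs from the paper's: the paper expands $\ln(1-x)=-\sum_{n\geq1}x^n/n$ (via integrating the geometric series) and bounds the tail $\sum_{n\geq 3}x^n/n$ by $\tfrac{x^2}{2}$ when $x\leq\tfrac12$, which is exactly the series-based estimate you sketch as your alternative (your version, with the tail bounded by $\tfrac{x^3}{3(1-x)}\leq\tfrac{2x^3}{3}$, is in fact a slightly sharper form of the same step). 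The trade-off is minor: your derivative argument is self-contained and sidesteps any tail estimate, while the series route makes the slack $\tfrac{x^2}{2}-\sum_{n\geq3}x^n/n$ explicit; either feeds identically into the multiplicative-weights regret bound of Proposition~\ref{prop:agnMWalg}.
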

\begin{proof}
By the Taylor series, it is known that $\forall x\in [0,1)$, $\frac{1}{1-x}=1+x+x^2+x^3+\ldots$, hence we get:
\begin{align*}
    1-x &=\exp(\ln(1-x))=\exp(-\int\frac{1}{1-x})=\exp(-\int (1+x+x^2+\ldots)) \\
    &=\exp(-(x+\frac{x^2}{2}+\frac{x^3}{3}+\ldots)) \underset{x\leq \frac{1}{2}}{\geq} \exp(-(x+\frac{x^2}{2}+\frac{x^2}{2}))
\end{align*}

\end{proof}
Using Claim~\ref{claim:agnIneq} with $x=\gamma$, we get:
\[\exp({\mathtt{opt_\H}}(-\gamma-\gamma^2))\leq (1-\gamma)^{\mathtt{opt_\H}}\leq n\exp(-\gamma L_T)\]
\[\Rightarrow{} -{\mathtt{opt_\H}}(\gamma+\gamma^2))\leq \ln{n}-\gamma L_T\]
\[\Rightarrow{} \gamma(L_T-{\mathtt{opt_\H}})\leq \ln{n}+\mathtt{opt_\H}\cdot \gamma^2\]
\[\Rightarrow{} \R(\A;S)=L_T-{\mathtt{opt_\H}}\leq \frac{\ln{n}}{\gamma}+\mathtt{opt_\H}\cdot \gamma\leq \frac{\ln{n}}{\gamma}+T\gamma\]
This gives us an upper bound for the regret, depending on $\gamma$, and by setting $\gamma=\sqrt{\frac{ln{n}}{T}}$ we will get the desired bound from the theorem:
\[\R(\A;S)=L_T-{\mathtt{opt_\H}}\leq 2\sqrt{Tln{n}}.\]
Note that the above result is valid only for large values of $T$ when $\gamma=\sqrt{\frac{ln{n}}{T}}\leq \frac{1}{2}$. But we can overcome this obstacle easily. If $\sqrt{\frac{ln{n}}{T}}> \frac{1}{2}$ then $T<4\ln{n}$. Hence $T^2<4T\ln{n}$ and by taking the square root of both sides, $T<2\sqrt{T\ln{n}}$. Since the regret is bounded by the length of the input sequence $T$, we will get that if $\sqrt{\frac{ln{n}}{T}}> \frac{1}{2}$ then,
\[\R(\A;S)\leq T\leq 2\sqrt{Tln{n}}.\]
\end{proof}

\subsubsection{General Class}

The previous section showed that any finite class of $k$-list online experts is agnostic online learnable in the $k$-list setting with a bound on the regret of $2\sqrt{T\ln{n}}$, where $n$ is the number of experts and $T$ is the length of the input sequence. 
By using the list SSP lemma to cover a (possibly infinite) hypothesis class $\H$ by a finite set of $k$-list online functions, we can generalize this result to general classes with no restriction on the size of the class. 


\begin{prop}\label{prop:agnw/oT}
Let $\H$ be a (possibly multi-labeled) hypothesis class, and let $k\in\mathbb{N}$ such that $\L_k(\H)=d<\infty$.  
Then, there exists a $k$-list learner $\A$ whose regret with respect to $\H$ satisfies 
\[\R(\A;S)<\frac{2\sqrt{2}}{\sqrt{2}-1}\sqrt{dT+dT\ln\left(\frac{T}{d}\ceil*{\frac{L-k}{k}}k\right)}\]
for every input sequence $S$, where $T$ denotes the size of $S$.
\end{prop}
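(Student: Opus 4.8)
The plan is to combine the List Sauer--Shelah--Perles Lemma (Proposition~\ref{prop:ssp}) with the finite--experts regret bound (Proposition~\ref{prop:agnMWalg}), and to wrap the result in a doubling--epochs reduction so that a single learner works without knowing the horizon in advance. First, for a target length $T$ let $\F_T$ be the $T$-cover of $\P(\H)$ produced by Proposition~\ref{prop:ssp}; using the standard Sauer--Shelah estimate $\sum_{i=0}^{d}\binom{T}{i}\le (eT/d)^{d}$ this gives $\ln\lvert\F_T\rvert\le d\bigl(1+\ln(\tfrac{T}{d}\ceil*{\tfrac{L-k}{k}}k)\bigr)$. Feeding the finite class $\F_T$ of $k$-list online functions into the Multiplicative Weights learner of Proposition~\ref{prop:agnMWalg} (with $\gamma=\sqrt{\ln\lvert\F_T\rvert/T}$) yields, on every length-$T$ sequence $S$, a learner $\A_T$ with $\M(\A_T;S)\le\min_{f\in\F_T}\M(f;S)+2\sqrt{T\ln\lvert\F_T\rvert}$.

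Next I would argue that competing with the cover $\F_T$ already suffices to compete with $\H$, even though in the agnostic setting $S$ need not be realizable by $\H$. Given $S=\{(x_t,y_t)\}_{t=1}^{T}$, pick $h^\star\in\H$ with $\M(h^\star;S)=\mathtt{opt_\H}$ and let $B\subseteq[T]$ be the (at most $\mathtt{opt_\H}$) rounds where $h^\star$ errs. Changing $y_t$ on each $t\in B$ to an arbitrary element of $h^\star(x_t)$ yields a length-$T$ pattern $S'$ consistent with $h^\star$, hence $S'\in\P(\H)$, hence $S'$ is covered by some $f\in\F_T$; since $S$ and $S'$ coincide outside $B$, this $f$ can err on $S$ only inside $B$, so $\min_{f\in\F_T}\M(f;S)\le\mathtt{opt_\H}$. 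Combined with the previous paragraph, this proves the bound for input sequences whose length equals the value of $T$ used to build the cover.

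To remove this dependence I would split the interaction into consecutive epochs of planned lengths $1,2,4,\dots,2^{m-1}$ (the last one possibly truncated by the true horizon), where $m$ is the least number of epochs covering the horizon, so that $2^{j-1}\le 2^{m-1}\le T$ for all $j\le m$; in epoch $j$ run a fresh instance of the construction above over the cover $\F_{2^{j-1}}$. Writing $S^{(j)}$ for the subsequence seen during epoch $j$ (itself a pattern, so the correction argument applies to it verbatim), we get $\M(\A;S^{(j)})\le\min_{h\in\H}\M(h;S^{(j)})+2\sqrt{2^{j-1}\ln\lvert\F_{2^{j-1}}\rvert}$. Summing over $j$, using $\sum_{j}\min_{h}\M(h;S^{(j)})\le\min_{h}\sum_{j}\M(h;S^{(j)})=\mathtt{opt_\H}$, bounding each $\ln\lvert\F_{2^{j-1}}\rvert$ by $C:=d\bigl(1+\ln(\tfrac{T}{d}\ceil*{\tfrac{L-k}{k}}k)\bigr)$, and evaluating the geometric sum $\sum_{j=1}^{m}\sqrt{2^{j-1}}<\tfrac{\sqrt2}{\sqrt2-1}\sqrt{T}$, the overhead term sums to at most $\tfrac{2\sqrt2}{\sqrt2-1}\sqrt{CT}=\tfrac{2\sqrt2}{\sqrt2-1}\sqrt{dT+dT\ln(\tfrac{T}{d}\ceil*{\tfrac{L-k}{k}}k)}$, as claimed.

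The technical heart --- and the step I expect to need the most care --- is the agnostic-to-realizable reduction of the second paragraph: one must check that the correction $S\rightsquigarrow S'$ keeps the pattern inside $\P(\H)$, that an \emph{online} (memory-bearing) cover still controls $\min_{f}\M(f;S)$ when $S'$ differs from $S$ on the set $B$, and that the same reasoning survives being applied to the subsequence seen inside each epoch. The remaining work --- the Sauer--Shelah simplification of $\lvert\F_T\rvert$, checking that Multiplicative Weights tuned to the planned epoch length still obeys its regret bound on a truncated final epoch, and the geometric-series bookkeeping that produces the constant $\tfrac{2\sqrt2}{\sqrt2-1}$ --- is routine.
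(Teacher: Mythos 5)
Your proposal is correct and follows essentially the same route as the paper: build the SSP $T$-cover $\F_T$ of $\P(\H)$, run Multiplicative Weights over this finite set of $k$-list online functions, and remove the horizon dependence by the doubling trick with the same geometric-series bookkeeping yielding the constant $\tfrac{2\sqrt2}{\sqrt2-1}$. The only difference is that you spell out explicitly the agnostic-to-realizable correction step (relabel the rounds where the best $h\in\H$ errs, and use that the cover functions depend only on the $x$-sequence) which the paper compresses into the one-line claim $\min_{f\in\F}\M(f;S)\le\min_{h\in\H}\M(h;S)$; your elaboration is valid and, if anything, fills in a detail the paper leaves implicit.
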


We prove Proposition~\ref{prop:agnw/oT} in two steps: first we prove Lemma~\ref{lemma:agnw/T} which is slightly weaker than 
Proposition~\ref{prop:agnw/oT} because the learning rule there depends on the horizon $T$.
Then, we use a standard doubling trick argument to obtain a learning rule that achieves vanishing regret simultanously for all $T$.

\begin{lemma} \label{lemma:agnw/T}
Let $\H$ be a (possibly multi-labeled) hypothesis class, let $k\in\mathbb{N}$ such that $\L_k(\H)=d<\infty$, and let $T\in\mathbb{N}$.  Then, there exists a $k$-list learner $\A$ whose regret with respect to $\H$ satisfies 
\[\R(\A;S)<2\sqrt{dT+dT\ln\Bigl(\frac{T}{d}\ceil*{\frac{L-k}{k}}k\Bigr)},\]
for every input sequence $S$ of size $T$.
\end{lemma}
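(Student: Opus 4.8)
The plan is to combine the two tools already established: the List Sauer-Shelah-Perles Lemma (Proposition~\ref{prop:ssp}), which compresses the (possibly infinite) class $\H$ into a finite family of $k$-list online experts capturing all realizable length-$T$ patterns, together with Proposition~\ref{prop:agnMWalg}, which supplies a low-regret $k$-list learner over any finite expert family. Concretely, I would first apply Proposition~\ref{prop:ssp} to the pattern class $\P(\H)$ --- noting $\L_k(\P(\H)) = \L_k(\H) = d$ by definition --- to obtain a $T$-cover $\F$ of $\P(\H)$ with $\lvert\F\rvert \le n := \sum_{i=0}^{d}\binom{T}{i}\bigl(\ceil*{\frac{L-k}{k}}k\bigr)^i$. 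Then I would instantiate the learner $\A$ of Proposition~\ref{prop:agnMWalg} on the finite expert set $\F$, which gives $\M(\A;S) - \min_{f\in\F}\M(f;S) \le 2\sqrt{T\ln n}$ for every length-$T$ sequence $S$. Here $\A$ genuinely depends on the horizon $T$, both through $\F$ and through the multiplicative-weights parameter $\gamma = \sqrt{(\ln n)/T}$, which is why the lemma fixes $T$; Proposition~\ref{prop:agnw/oT} later removes this dependence via a doubling argument.

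The step that needs care is upgrading ``competitive with the best expert in $\F$'' to ``competitive with the best hypothesis in $\H$'', i.e.\ proving $\min_{f\in\F}\M(f;S) \le \min_{h\in\H}\M(h;S) =: \mathtt{opt_\H}$ for every length-$T$ sequence $S$, even though $\F$ only covers \emph{realizable} patterns. Fix such an $S = \{(x_t,y_t)\}_{t=1}^T$ and let $h^\star\in\H$ attain $\mathtt{opt_\H}$. I would form the modified sequence $\tilde S$ by keeping every example on which $h^\star$ is correct and, at each of the (at most $\mathtt{opt_\H}$) positions $t$ with $y_t\notin h^\star(x_t)$, replacing $y_t$ by an arbitrary label of $h^\star(x_t)$. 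Then $\tilde S$ has length $T$ and is consistent with $h^\star$, so $\tilde S\in\P(\H)$, and the covering property yields $f\in\F$ whose lists contain every label of $\tilde S$ along its instance prefixes. Since an online function sees only the instances $x_1,\dots,x_{t-1}$ (not the labels), $f$ predicts identically on $S$ and $\tilde S$, and as these sequences differ in at most $\mathtt{opt_\H}$ coordinates, $f$ makes at most $\mathtt{opt_\H}$ mistakes on $S$. Hence $\min_{f\in\F}\M(f;S)\le\mathtt{opt_\H}$, giving $\R_\H(\A;S) = \M(\A;S) - \mathtt{opt_\H} \le \M(\A;S) - \min_{f\in\F}\M(f;S) \le 2\sqrt{T\ln n}$.

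Finally I would bound $\ln n$: writing $m := \ceil*{\frac{L-k}{k}}k \ge 1$ (one may assume $L>k$, else the learner predicts all labels and the claim is trivial) and using $\sum_{i=0}^{d}\binom{T}{i}m^i \le m^d\sum_{i=0}^{d}\binom{T}{i} \le \bigl(\tfrac{eTm}{d}\bigr)^d$ gives $\ln n \le d + d\ln\bigl(\tfrac{T}{d}\ceil*{\frac{L-k}{k}}k\bigr)$, hence $2\sqrt{T\ln n} \le 2\sqrt{dT + dT\ln\bigl(\tfrac{T}{d}\ceil*{\frac{L-k}{k}}k\bigr)}$, the stated bound (the degenerate regime $d > T$ is covered by the trivial estimate $\R_\H(\A;S)\le T$). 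I expect the covering-to-competitor reduction of the middle paragraph to be the only real obstacle: everything else is assembling Propositions~\ref{prop:ssp} and~\ref{prop:agnMWalg} with a routine binomial-sum estimate, whereas there one must observe that a perturbation of the true sequence at $h^\star$'s error locations stays realizable and is still tracked by some expert, and that tracking a perturbed sequence costs the learner at most the number of perturbations.
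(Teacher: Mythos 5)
Your proposal is correct and follows essentially the same route as the paper: apply the List SSP lemma (Proposition~\ref{prop:ssp}) to obtain a finite $T$-cover $\F$ of $\P(\H)$, run the multiplicative-weights learner of Proposition~\ref{prop:agnMWalg} over $\F$, and bound $\ln\lvert\F\rvert \le d+d\ln\bigl(\tfrac{T}{d}\bigl\lceil\tfrac{L-k}{k}\bigr\rceil k\bigr)$. The only difference is that you explicitly justify $\min_{f\in\F}\M(f;S)\le\min_{h\in\H}\M(h;S)$ for non-realizable $S$ (via the label-repaired sequence $\tilde S$ and the fact that online experts see only instances), a step the paper asserts without elaboration.
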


\begin{proof}
Let $\F$ be a family of $n \leq \binom{T}{\leq d}\Bigl(\ceil*{\frac{L-k}{k}}k\Bigr)^d$ $k$-list online functions which is a $T$-cover of $\P(\H)$. Namely, that covers all realizable sequences by $\H$ of length $T$. Apply the multiplicative weights algorithm $\A$ described in the proof of Proposition~\ref{prop:agnMWalg} on the class $\F$, and get that 
\begin{align*}
\R(\A;S)&\leq 2\sqrt{T\ln{n}} \\
&= 2\sqrt{T\ln\Bigl(\binom{T}{\leq d}\Bigl(\ceil*{\frac{L-k}{k}}k\Bigr)^d\Bigr)} \\
&\leq 2\sqrt{T\ln\Bigl(\Bigl(\frac{eT}{d}\Bigr)^d\Bigl(\ceil*{\frac{L-k}{k}}k\Bigr)^d\Bigr)} \\
&=2\sqrt{dT+dT\ln\Bigl(\frac{T}{d}\ceil*{\frac{L-k}{k}}k\Bigr)}=:r(T)
\end{align*}
 where the regret is with respect to the class $\F$. This means that
 \[\M(\A;S) -  \min_{f\in \F} \M(f;S) \leq r(T).\] Since $ \min_{f\in \F} \M(f;S)\leq  \min_{h\in \H} \M(h;S)$ we get that 
 \[\M(\A;S) -  \min_{h\in \H} \M(h;S)\leq \M(\A;S) -  \min_{f\in \F} \M(f;S)\leq r(T).\] Hence,
 \[\R(\A;S)\leq r(T)\]
 with respect to the class $\H$.
\end{proof}

\begin{proof}[proof of Proposition \ref{prop:agnw/oT}]
We will use the algorithm $\A$ described on the proof of Lemma~\ref{lemma:agnw/T}, which knows the length of the input sequence, and apply it on subsequences of the original input sequence of length $T$ as follows:
The first subsequence $S_1$ will be the first sample in $S$: $(x_1,y_1)$, the second subsequence $S_2$ will be the next two samples: $(x_2,y_2), (x_3,y_3)$, the third subsequence $S_3$ will be the next four samples: $(x_4,y_4), (x_5,y_5), (x_6,y_6), (x_7,y_7)$ and so on. We will get a sequence of subsequences $\{S_i\}_{i\geq1}$ with increasing lengths: $\{T_i\}_{i\geq1}=1,2,4,8,\ldots$ such that the length of the $i$'th sequence $S_i$ will be $\lvert S_i\rvert=T_i=2^{i-1}$. 
\footnote{The length of the last sequence may be smaller, but it can only improve the bound.} 
On each of these subsequences with a known length, we can now apply the algorithm $\A$, to get the following bound on the regret:
\[2\sqrt{dT_i+dT_i\ln\left(\frac{T_i}{d}\ceil*{\frac{L-k}{k}}k\right)}.\]
For each step $i$, we got a bound on the regret for a specific pattern $h_i\in \H$, so 
\[\min_{h\in \H} \M(h;S)\geq \sum_{i=1}^{\lfloor \log{T}\rfloor +1}\min_{h\in \H} \M(h;S_i),\]
since the optimal ``global" hypothesis $h$ may make more mistakes on $S_i$ then the optimal ``local" hypothesis $h_i$. Hence,
\[\R(\A;S)\leq \sum_{i=1}^{\lfloor \log{T}\rfloor +1}\R(\A_i;S_i),\]
where $\A_i$ is the algorithm applied on the subsequence $S_i$, with the assumption of length $T_i$, and with some calculations we will get
\begin{align*}
\R(\A;S)&\leq \sum_{i=1}^{\lfloor \log{T}\rfloor +1}\R(\A_i;S_i) \\
&=    \sum_{i=1}^{\lfloor \log{T}\rfloor +1}2\sqrt{dT_i+dT_i\ln\left(\frac{T_i}{d}\ceil*{\frac{L-k}{k}}k\right)} \\
&=    \sum_{i=1}^{\lfloor \log{T}\rfloor +1}2\sqrt{d2^{i-1}+d2^{i-1}\ln\left(\frac{2^{i-1}}{d}\ceil*{\frac{L-k}{k}}k\right)} \\
&\leq \sum_{i=1}^{\lfloor \log{T}\rfloor +1}2\sqrt{d2^{i-1}+d2^{i-1}\ln\left(\frac{2^{\log{T}}}{d}\ceil*{\frac{L-k}{k}}k\right)} \\
&=    2\sqrt{d+d\ln\left(\frac{T}{d}\ceil*{\frac{L-k}{k}}k\right)}\sum_{i=1}^{\lfloor \log{T}\rfloor +1}2^{i-1} \\
&=    2\sqrt{d+d\ln\left(\frac{T}{d}\ceil*{\frac{L-k}{k}}k\right)}\frac{\sqrt{2}^{\lfloor \log{T}\rfloor +1}-1}{\sqrt{2}-1} \\
&\leq 2\sqrt{d+d\ln\left(\frac{T}{d}\ceil*{\frac{L-k}{k}}k\right)}\frac{\sqrt{2}^{\lfloor \log{T}\rfloor +1}}{\sqrt{2}-1} \\
&\leq 2\sqrt{d+d\ln\left(\frac{T}{d}\ceil*{\frac{L-k}{k}}k\right)}\frac{\sqrt{2T}}{\sqrt{2}-1} \\
&= \frac{2\sqrt{2}}{\sqrt{2}-1}\sqrt{dT+dT\ln\left(\frac{T}{d}\ceil*{\frac{L-k}{k}}k\right)}. \\
\end{align*}
\end{proof}

The following Proposition shows that a class with infinite $(k+1)$-ary Littlestone dimension is not $k$-list learnable in the agnostic setting.

\begin{prop}\label{prop:agnlower}
Let $\H$ be a (multi-labeled) hypothesis class with infinite $(k+1)$-ary Littlestone dimension $\L_k(\H)=\infty$. Then for each (maybe randomized) $k$-list algorithm $\A$ and $T\in \mathbb{N}$ there exists an input sequence $S$ of length $T$ such that the regret is \[\R(\A;S)= \Omega(T).\]
\end{prop}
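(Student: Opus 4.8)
The plan is to show that if $\L_k(\H)=\infty$ then no $k$-list algorithm can have sublinear regret, by constructing a hard input distribution on which \emph{every} deterministic $k$-list learner (and hence, by averaging, every randomized one) incurs roughly $\bigl(\tfrac{1}{k+1}-\epsilon\bigr)T$ expected mistakes while some fixed $h\in\H$ makes zero. Since $\L_k(\H)=\infty$, for the given horizon $T$ there is a complete $(k+1)$-ary mistake tree of depth $T$ shattered by $\P(\H)$. First I would sample a uniformly random root-to-leaf branch in this tree: this yields a random sequence $S=\{(x_t,y_t)\}_{t=1}^T$ where the $x_t$'s are determined by the branch chosen so far and each $y_t$ is chosen uniformly among the $k+1$ edge-labels at node $v_{t-1}$. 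By the shattering property, this random $S$ is realizable by $\H$, so $\min_{h\in\H}\M(h;S)=0$ always.

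Next I would lower-bound the expected number of mistakes of the learner on this random $S$. Fix any deterministic $k$-list learner $\A$ (it suffices to treat deterministic learners, since a randomized learner is a distribution over deterministic ones and we can push the expectation through). At step $t$, conditioned on the history $S_{<t}$, the learner outputs a list $P_t$ of size $k$; the true label $y_t$ is uniform over the $k+1$ distinct labels on the outgoing edges of $v_{t-1}$, and $P_t$ can intersect this $(k+1)$-element set in at most $k$ elements, so $\Pr[y_t\notin P_t \mid S_{<t}]\geq \tfrac{1}{k+1}$. Summing over $t$ and taking expectations gives $\E_S[\M(\A;S)]\geq \tfrac{T}{k+1}$. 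Therefore $\E_S[\R_\H(\A;S)]\geq \tfrac{T}{k+1}$, and in particular there exists a specific sequence $S$ of length $T$ with $\R_\H(\A;S)\geq \tfrac{T}{k+1}=\Omega(T)$, as claimed. (One should note for randomized $\A=\A(x;S)\in\Delta(\binom{\Y}{k})$ the same bound $\Pr[y_t\notin \hat L_t\mid S_{<t}]\geq\tfrac{1}{k+1}$ holds pointwise in the internal randomness, so no averaging subtlety arises.)

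The main point requiring care — really the only obstacle — is the subtle interaction between the learner's adaptivity and the randomness of the branch: the node $x_t=x(v_{t-1})$ presented at step $t$ itself depends on the random labels $y_1,\dots,y_{t-1}$ chosen so far, so one must argue carefully that conditioning on the full history $S_{<t}$ still leaves $y_t$ uniform over the $k+1$ labels of the (now determined) node $v_{t-1}$. This is exactly the standard online-learning adversary argument, just with $k+1$ children instead of $2$, so I expect it to go through cleanly: the branch process is a martingale-type construction where at each step the next node is a deterministic function of the branch-so-far, and the next label is drawn fresh and uniform. A minor secondary point is to make the final ``there exists $S$'' statement precise — it follows from $\E_S[\R_\H(\A;S)]=\Omega(T)$ since the maximum is at least the average. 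I would also remark that this argument in fact gives the stronger bound $\R_\H(\A;T)\geq \tfrac{T}{k+1}$ for the worst-case regret, matching (up to the additive $o(T)$ term) the regime where learning is impossible, and contrast it with Theorem~\ref{thm:agnchar} which shows the finite-dimension case achieves $o(T)$.
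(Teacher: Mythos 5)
Your proposal is correct and follows essentially the same argument as the paper: sample a uniformly random branch of a depth-$T$ complete $(k+1)$-ary shattered mistake tree, note that the sequence is realizable (so the comparator makes zero mistakes) while the learner's size-$k$ list misses the uniformly random label with probability at least $\tfrac{1}{k+1}$ at every step, giving expected regret at least $\tfrac{T}{k+1}$ and hence a specific sequence with regret $\Omega(T)$. Your added care about adaptivity and about passing from the expectation to a single hard sequence is consistent with, and slightly more explicit than, the paper's presentation.
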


\begin{proof}
The proof repeats the same argument as in the lower bound proof of $\L_k(\P)$ for randomized learners over the number of mistakes;
let $\A$ be a $k$-list online learner and let $T\in \mathbb{N}$. Since $\L_k(\H)=\infty$, there is a complete $(k+1)$-ary mistake tree of depth $T$, which is shattered by $\H$. 
Follow a root-to-leaf path in the shattered tree, and choose one of the $k+1$ labels associated to the $k+1$ outgoing edges from the current node in the path, uniformly at random. At each step, the learner $\A$ will make a mistake with probability $\frac{1}{k+1}$. The sequence $S$ of length $T$ generated by this process, is realizable by $\H$, which means $\min_{h\in \H} \M(h;S)=0$. Further, $\A$ makes $\frac{1}{k+1}T$ mistakes on $S$ in expectation. Thus, the regret will be $\frac{1}{k+1}T$ which means $\R(\A;S)= \Omega(T)$.
\end{proof}

\begin{proof} [proof of Theorem \ref{thm:agnchar}] The proof of the upper bound on the regret was given in a series of propositions: Proposition~\ref{prop:agnMWalg}, Lemma~\ref{lemma:agnw/T} and Proposition~\ref{prop:agnw/oT}, while the proof of the lower bound was given by Proposition~\ref{prop:agnlower}.

\end{proof}


    

\subsection{Proof of Theorem \ref{thm:negreg}}\label{sec:negregproof}

\begin{proof} 
Let $\H$ be a multi-labeled hypothesis class, $\lvert\H\rvert>1$, and let $1\leq \k_\H\leq \lvert \Y\rvert$ be the minimal number such that $\L_{\k_\H}(\H)<\infty$ \footnote{Notice that the minimum always exists if $\Y$ is finite since $\L_{\lvert \Y\rvert}(\H)<\infty$.}.
By Theorem~\ref{thm:agnchar}, an agnostic list-learning rule $\A$ for $\H$ exists if and only if $\k_\H \leq \k_\A$, where $\k_\A$ is the size of the lists used by $\A$. 

\begin{proof}[Proof of Item~\ref{item:agn1} (Negative Regret)]

Let $\k_\H < k \leq \lvert \Y\rvert$. We will show that there exists a learning rule $\A$ with $\k_\A=k$ which learns $\H$ with a negative regret in the following sense: its expected regret on every input sequence of length $T$ is at most
\[(p-1)\cdot \mathtt{opt_\H} + c\cdot \bigl(\sqrt{{T\ln T}}\bigr),\]
where $p=p(\H)<1$, $c=c(\H)$ both do not depend on $T$, and $\mathtt{opt_\H}$ is the number of mistakes made by the best function in $\H$.
The idea is to use a list-learning rule, denoted by $\mathcal{B}$, that uses lists of size~$\k_\H$ and has vanishing regret ($O(\sqrt{d\cdot T\ln T})$) where $d=\L_{\k_\H}(\H)$ (such a learning rule exists by Theorem~\ref{thm:agnchar}), and in each time-step add to its list $\k_\A-\k_\H$ labels that are chosen uniformly at random from the remaining labels. 
To analyze the regret, notice that on every fixed input sequence $S=\{(x_t,y_t)\}_{t=1} ^{T}$ of length $T$, the accumulated loss $\A$ suffers is equal to the accumulated loss $\mathcal{B}$ suffers times $p<1$, where $p=\frac{\lvert\Y\rvert - k}{\lvert\Y\rvert - \k_\H}$. 
Indeed, in every time-step $t$, let $\ell^\mathcal{B}_t$ denote the expected loss of $\mathcal{B}$ in this round, that is, $\ell^\mathcal{B}_t$ is the probability that the random list of size $\k_\H$ does not include $y_t$, and also let $\Y_{add}$ denote the (random) set of labels that are added to the list $\mathcal{B}$ predicts (so $\lvert\Y_{add}\rvert=k-\k_\H$). Then, 
\begin{align*}
  \ell^\A_t
  &=\ell^\mathcal{B}_t\cdot\Pr\Bigl[y_t \notin \Y_{add} \vert y_t \notin \mathcal{B}\Bigl(x_t;\{(x_i,y_i)\}_{i<t}\Bigr)\Bigr] \\
  &=\ell^\mathcal{B}_t\left(1-\frac{k-\k_\H}{\lvert\Y\rvert - \k_\H}\right)
  =\ell^\mathcal{B}_t\left(\frac{\lvert\Y\rvert - k}{\lvert\Y\rvert - \k_\H}\right)
  =\ell^\mathcal{B}_t\cdot p.
\end{align*}
And by summing over the whole input sequence $S$, we get
\begin{align*}
 \R(\A;S) &= \M(\A;S) - \min_{h\in \H} \M(h;S) \\
           &= \sum_{t=1}^T \ell^\A_t - \mathtt{opt_\H} \\
           &= \sum_{t=1}^T \ell^\mathcal{B}_t\cdot p - \mathtt{opt_\H} \\
           &= p\sum_{t=1}^T \ell^\mathcal{B}_t - \mathtt{opt_\H} \\
           & \leq p(\mathtt{opt_\H} + c\cdot \bigl(\sqrt{{T\ln T}}\bigr)) - \mathtt{opt_\H} \\
           &= (p-1)\mathtt{opt_\H} + p\cdot c\bigl(\sqrt{{T\ln T}}\bigr) \\
           &\leq (p-1)\mathtt{opt_\H} + c\bigl(\sqrt{{T\ln T}}\bigr).
\end{align*}
\end{proof}

\begin{proof}[Proof of Item~\ref{item:agn2} (Nonnegative Regret)]
Assume that $\A$ is a list-learning rule with $\k_\A = \k_\H$.
We will show that $\A$ has a non-negative regret, by showing that for a sufficient small $p>0$, there is an input sequence $S$ with arbitrarily large length $T$, such that $\mathtt{opt_\H} \leq \lfloor p\cdot T\rfloor$ but the expected number of mistakes made by $\A$ is at least $\lfloor p\cdot T\rfloor$. 
We will show that this holds for any $p\leq \frac{1}{\k_\H+1}$.
To simplify presentation we assume here that $p\cdot T\in \mathbb{N}$.

Let $T \in \mathbb{N}$. Since $\L_{\k_{\H}-1}(\H)=\infty$, there is a complete $\k_\H$-ary mistake tree $\T$ of length $T$ which is shattered by $\H$. 
We construct the hard sequence $S$ randomly as follows:
\begin{enumerate}
\item First draw a branch $B$ in $\T$ uniformly at random by starting at the root and at each step proceed on an outgoing edge which is chosen uniformly at random. 
\item Let $S_B$ denote the sequence of $T$ examples corresponding to the random branch, and let $L_t$ be the set of labels corresponding to the outgoing edges from the $t$'th node in the branch $B$. (In particular, notice that $\lvert L_t\rvert = \k_\H$ and $y_t\in L_t$.)
\item Pick uniformly a random subsequence of $p\cdot  T$ examples from $S_B$, and for each of the examples $(x_t,y_t)$ in the subsequence, replace the label $y_t$ with a random label $y_t'$, chosen uniformly such that $y_t'\notin L_t$. 
\item Denote by $S$ the resulting random sequence.
\end{enumerate}

Notice that $\E[\mathtt{opt_\H}] \leq p\cdot T$; indeed, $S_B$ is realizable and $S$ differs from it in $p\cdot T$ places.

It remains to show that the expected number of mistakes made by $\A$ is $\geq p\cdot T$. 
By linearity of expectation, it suffices to show that the probability that $\A$ makes a mistake at each step is $\geq p$.
For each $y \in \Y$ and $t\leq T$, let $p_t(y)$ be the probability that the label $y$ is in the predicted list of $\A$ at time-step $t$ and let $q_t(y)=1-p_t(y)$ be the probability that $y$ is not in the predicted list of $\A$ in time $t$.
Let $\alpha=\sum_{y\in L_t}p_t(y)$ and $\beta=\sum_{y\in {\Y \setminus L_t}}p_t(y)$;
thus, $\alpha + \beta = \lvert L_t\rvert = \k_\H$.
Then,
the probability that $\A$ makes a mistake at the time-step $t$ is:
\begin{align*}
   l_t(\A) 
   &= (1-p)\cdot \sum_{y\in L_t}q_t(y)\cdot \frac{1}{\lvert L_t\rvert } + p\cdot\sum_{y\notin L_t}q_t(y)\cdot\frac{1}{\lvert\Y\setminus L_t\rvert} \\
&=(1-p)\cdot\frac{\k_\H - \alpha}{\k_\H} + p\cdot\frac{\lvert\Y\rvert - \k_\H - \beta}{\lvert\Y\rvert - \k_\H} \\
&=(1-p)\cdot\frac{\beta}{\k_\H} + p\cdot\frac{\lvert\Y\rvert - \k_\H - \beta}{\lvert\Y\rvert - \k_\H}. 
\end{align*}

It suffices to show that $l_t(\A)\geq p$ for every possible value of $\beta\in [0,\k_\H]$. Indeed:
\begin{itemize}
    \item If $\beta=0$ then $l_t(\A)=p$. 
    \item If $\beta\geq 1$, then $l_t(\A) \geq (1-p)\frac{\beta}{\k_\H}\geq (1-p)\frac{1}{k_{\H}}\geq p$, because $p\leq \frac{1}{\k_\H+1}$.
    \item If $0<\beta<1$, then since $l_t(\A)$ is a linear function in $\beta$, getting values $\geq p$ for $\beta\in\{0,1\}$ then also for the middle range, $l_t(\A)\geq p$.
\end{itemize}

To conclude this analysis, for each value of $p$ smaller than $1/(\k_\H+1)$, each list-learning rule~$\A$, with list size $\k_\H$, achieves a non-negative regret for some input sequence with arbitrarily large length $T$.

\end{proof}

\begin{proof}[Proof of Item~\ref{item:agn3} (Unbounded Regret)]
Let $\A$ be a list learning rule with $\k_\A = \k_\H$, and let $h',h''\in \H$ and $x \in \X$ such that $h'(x),h''(x)$ are distinct lists of size $\k_\H$. 
Produce a random sequence of length $T$ whose examples are of the form $(x,y_t)$, where $y_t$ is a random label drawn uniformly from $h'(x)\cup h''(x)$.
Notice that the expected number of mistakes of $\A$ is at least $T\cdot\bigl(1 - \frac{\k_\H}{U}\bigr)$, where $U= \lvert h'(x)\cup h''(x)\rvert > \k_\H$.
Thus it remains to show that $\mathtt{opt_\H}$ does better.
Consider the random variable $\Delta$ counting how many of the $y_t$'s belong to the symmetric difference $h'(x)\Delta h''(x)$. 
If $y_t$ is in the intersection of $h'(x)$ and $h''(x)$ then both of the functions do not make a mistake on $(x,y_t)$, and therefore such examples do not contribute to $\mathtt{opt_\H}$. 
Otherwise, if $y_t$ is in the symmetric difference $h'(x)\Delta h''(x)$ then the probability that $y_t$ belongs to $h'(x)$ is equal to the probability it belongs to $h''(x)$, and both are equal to~$1/2$. 
Thus, conditioned on $\Delta$, the number of mistakes made by $h'$ or $h''$ behaves like a binomial random variables $X', X''$ with parameters $n=\Delta, p=1/2$, where $X'+X''=\Delta$ and $\E[\mathtt{opt_\H} \vert \Delta]$ is at most $\E[\min\{X', X''\}\vert \Delta]=\E[\min\{X', \Delta-X'\}\vert \Delta]$:
\[\E[\mathtt{opt_\H} \vert \Delta] 
\leq \E[\min\{X', \Delta-X'\}\vert \Delta]  
= \frac{\Delta}{2} - \E\Big[\Big| X'-\frac{\Delta}{2}\Big| \vert \Delta\Big].\]
Now, notice that
\[\E\Big[\Big| X'-\frac{\Delta}{2}\Big| \vert \Delta\Big] = \sqrt{\mathsf{Var}(X')} = \frac{\sqrt{\Delta}}{2},\]
and by linearity of expectation,
\[\E[\mathtt{opt_\H}] = \E[\E[\mathtt{opt_\H}\vert \Delta]] \leq \frac{1}{2}\E[\Delta] - \frac{1}{2}\E[\sqrt{\Delta}] \leq  \E[L_T(\A)] -  \frac{1}{2}\E[\sqrt{\Delta}],\]
where the last inequality holds because 
\[\E[\Delta]=T\cdot \frac{U-\lvert h'(x)\cap h''(x)\rvert}{U}=T\cdot\frac{U-(2\k_\H - U)}{U}=2T\cdot\frac{U - \k_\H}{U}\leq 2\E[L_T(\A)].\]
It thus remains to show that $\E[\sqrt{\Delta}] = \Omega(\sqrt{T})$. Indeed, since $\Delta$ is a binomial random variable with horizon $T$ and probability $p=2\cdot \frac{U - \k_\H}{U}$,
\[\E[\sqrt{\Delta}] 
\geq \sqrt{T\cdot p}-\frac{1-p}{2\sqrt{T\cdot p}} 
   = \Omega(\sqrt{T}).\]
The latter follows because $\sqrt{x} \geq 1+\frac{x-1}{2}-\frac{(x-1)^2}{2}$ and therefore for any nonnegative random variable $X$, $\E[\sqrt{X}] \geq \sqrt{\E[X]}\left(1-\frac{\mathsf{Var}(X)}{2\E[X]^2}\right)$.
Further, this bound can be expressed in terms of $\k_\H$, by the observation that 
\begin{equation}
   p=2\cdot \frac{U - \k_\H}{U} \geq \frac{2}{\k_\H+1}, \tag{$\k_\H + 1\leq U\leq 2\cdot \k_\H$} 
\end{equation}
and thus,
\[\E[\sqrt{\Delta}] 
\geq \sqrt{T\cdot p}-\frac{1-p}{2\sqrt{T\cdot p}} 
\geq \sqrt{T\cdot\frac{2}{\k_\H+1}}-\frac{1-\frac{2}{\k_\H+1}}{2\sqrt{T\cdot \frac{2}{\k_\H+1}}}
   = \Omega(\sqrt{T}).\]

\end{proof}

\end{proof}

\section{Future Research}\label{sec:future}
There are plenty of directions for keep exploring list online learning, here we consider a few natural ones that arise in this work.
\begin{enumerate}
\item Our characterization of the agnostic case only applies when the label space is finite. (Quantitatively, our upper bound on the regret depends on the size of the label space.) Does the characterization extends to infinite label space? More generally, it will be interesting to derive tighter lower and upper bounds on the optimal regret.
\item Study specific list learning tasks. For example, for linear classification with margin, we demonstrated an upper bound using the list Perceptron algorithm.  It will be interesting to determine whether this algorithm achieves an optimal mistake bound, or perhaps there are better algorithms?
\item What is the optimal expected mistake bound for randomized list learners? Is there a natural description of an optimal randomized list learner?
\item We study list online learning in the context of multiclass online prediction using the 0/1 classification loss. Extensions to weighted and continuous losses are left for future research.
\item Study the expressivity of pattern classes (see Section~\ref{app:patternexample}).
\end{enumerate}


\section*{Acknowledgements}
We thank Amir Yehudayoff for insightful discussions.


\bibliographystyle{abbrvnat}
\bibliography{sample}

\appendix

\section{Pattern Classes vs.\ Hypothesis Classes}\label{app:patternexample}

In this section we compare between hypothesis classes and pattern classes in terms of their expressivity to model learning tasks.

Recall that a pattern class $\P$ is a collection of finite sequences of examples, which is downward closed: for every $S\in \P$, if $S'\subseteq S$ is a subsequence of $S$ then $S'\in \P$,
and also that the induced pattern class $\P(\H)$ of an hypothesis class $\H$ is:
\[\P(\H)=\{S\in\Z^\star : S \text{ is consistent with some } h\in \H\}.\]

There are two basic properties which seem to capture the gap between pattern classes and hypothesis classes. 
These two properties hold in any induced pattern class $\P(\H)$, but (as shown below)  they do not necessarily hold in a general pattern class $\P$. 

\paragraph{Contradiction-free.} 
A pattern class $\P$ is called contradiction-free if every pattern in it has no contradictions, 
namely it does not contain the same example twice with different labels. 
Clearly, $\P(\H)$ is contradiction-free for every hypothesis class $\H$.
Thus, any pattern class $\P$ which contains a contradiction cannot be extended by an hypothesis class $\H$.
Below we focus only on contradiction-free pattern classes.

\paragraph{Symmetry.} 
A pattern class $\P$ is called symmetric if it is closed under taking permutations: if $S\in \P$ then $\pi(S)\in \P$ for every permutation $\pi$ of the elements in $S$.
Clearly, $\P(\H)$ is symmetric for every hypothesis class $\H$.
Non-symmetric pattern classes can be naturally used to express adaptive/dynamic hypotheses (or hypotheses with memory),
which are functions of the form $h:\X^\star\to\Y$ mapping a finite sequence $x_1,\ldots,x_t$ to an output $y_t\in \Y$.
($x_t$ is thought of as the current point, $y_t$ is its label, and $x_1,\ldots,x_{t-1}$ are the previous points.)
Thus, non-symmetric pattern classes allows us to study the learnability of adaptive hypothesis classes.

This suggests the following question: \emph{Can learnable pattern classes which are contradiction-free (but possibly non-symmetric) be extended by a learnable hypothesis class?}
I.e.\ is it the case that for every contradiction-free pattern class $\P$ with $\L(\P)<\infty$ there exists an hypothesis class $\H$
such that $\P(\H)\supseteq \P$ and $\L(\H)<\infty$?

The answer to this question is no, as the following example shows.
Let $\X$ be an infinite set and $\Y=\{0,1\}$. Consider the pattern class $\P\subset\left(\X\times\Y\right)^*$ containing all patterns $\{\left(x_i,y_i\right)\}_{i=1}^n$ such that their restriction to $\Y$, $\{y_i\}_{i=1}^n$, is non-increasing: if $i<j$ then $y_i\geq y_j$, and such that $y_i\neq y_j \Rightarrow x_i\neq x_j$ (i.e. contradiction-free property).

Not hard to see that $\P$ is online learnable by the learner that predicts the label $1$, until she makes a mistake (namely, the true label is $0$), and then predicts the label $0$ for the rest of the input sequence of examples.

Now assume that $\H\subset\Y^\X$ is an hypothesis class, such that $\P\subseteq \P(\H)$. We will show that $\H$ cannot be online learnable, by showing that $\L(\H)\geq d$ for all $d \in \mathbb{N}$. Let $d \in \mathbb{N}$, we will show that there is a complete binary mistake tree of depth $d$ which is shattered by $\H$. In fact, we will show a stronger result; we will show that almost $\it{any}$ complete binary mistake tree of depth~$d$ is shattered by $\H$, with the only restriction of different values of $x\in \X$ in each root-to-leaf path in the tree, to avoid contradictions. 
Let $\T$ be such a tree, and let $S$ be the induced sequence of some path in $\T$. 
We claim that $S$ is realized by $\H$. Indeed, reorder the elements of $S$ to get a sequence $S'$ with non-increasing order of the labels. I.e., permute $S$ to get a sequence $S'\in \P\subseteq \P(\H)$. Since $\P(\H)$ is symmetric it follows that $S\in \P(\H)$ and hence $S$ is realized by $\H$. Thus, $\T$ is shattered by $\H$ as claimed.

What about learnable pattern classes $\P$ which are both free of contradictions and symmetric? is there always a learnable hypothesis class $\H$ such that $\P \subset \P(\H)$? This remains an interesting open question for future research. (We remark that an equivalent open question was asked by~\citet{AlonHHM21}.)





\section{Tightness of Bounds for Randomized Learners}\label{app:randexamples}
Here we give two examples for pattern classes, one class $\P_1$ with $\R_k(\P_1)=\frac{1}{k+1}\L_k(\P_1)$ and second class $\P_2$ with $\R_k(\P_2)=\L_k(\P_2)$.

\paragraph{Lower bound is tight.}
The first class is the induced pattern class $\P_1=\P(\H)$ of the hypothesis class of all functions from $[d]$ to $[k+1]$:
\[\H=[k+1]^{[d]}.\]
Notice that $\L_k(\P_1)=d$.

Now, we will define a randomized algorithm $\A$ which makes at most $\frac{1}{k+1}\L_k(\P_1)=\frac{d}{k+1}$ mistakes in expectation on every realizable sequence. Notice that in a realizable sequence $S=\{(x_i,y_i)\}_{i=1} ^{T}$ by $\P_1$, if $x_i=x_j$ then $y_i=y_j$. Define the algorithm $\A$ so that for each input example $x_i$, the prediction will be a list of $k$ labels, chosen uniformly at random. If the algorithm already seen the example $x_i$, then the predicted set will consists of the matching known label and completed to a list of size $k$ arbitrarily. for each $x\in[d]$, the algorithm $\A$ may make a mistake on $x$ in at most one single time step. The probability that the learner will make a mistake on $x$ at this time step is $\frac{1}{k+1}$. Hence, the expected number of mistakes of $\A$ on $S$ is at most $\frac{d}{k+1}$.

\paragraph{Upper bound is tight.}
The second class is the induced pattern class $\P_2=\P(\H)$ of the following hypothesis class $\H\subset\{0,1,\ldots,k\}^{\mathbb{N}}$:
\[\H=\{h\in \{0,1,\ldots,k\}^{\mathbb{N}} : \lvert h^{-1}(0)\rvert\leq d\}.\]
Notice that $\P_2$ is online learnable by the learner that always predicts the list $\{1,2,\ldots,k\}$, and that $\L_k(\P_2)=d$.
Indeed, 
take a complete $(k+1)$-ary mistake tree of depth $d$, where all nodes with same depth are associated with the same $x\in \mathbb{N}$ and each two nodes with different depths are associated with different items from $\mathbb{N}$. This tree is shattered by $\P_2$, hence $\L_k(\P_2)\geq d$. In addition, there is no complete shattered tree with depth larger than $d$, since a path with all zeros labels must have size of at most $d$, hence $\L_k(\P_2)\leq d$. 

Now let $\A$ be a randomized learner. We will show that for each $\epsilon>0$, there is a realizable sequence $S=\{(x_t,y_t)\}_{t=1}^T$ of length $T(\epsilon)$, such that $\A$ makes at least $d-\epsilon$ mistakes on $S$ in expectation.
First, we choose the $x_t$'s such that they are all distinct, e.g. $x_t=t$.
The labels $y_t$ are defined as follows:
Let $\epsilon>0$. Let $P_t$ be the (maybe random) predicted set of $\A$ at time $t$. If $\Pr(0\in P_t)\leq \frac{\epsilon}{d}$, then we will set $y_t=0$. If we already set $y_t=0$ $d$ times, then we will choose a different value for $y_t$, uniformly at random from $\{1,\ldots,k\}$, to keep the realizability property of the input sequence.
If $\Pr(0\in P_t)> \frac{\epsilon}{d}$, then we will choose the label $y_t$ uniformly at random from the set $\{1,\ldots,k\}$.
By this construction of the input sequence, we get the following result; in the case that the input sequence contains $d$ entries with $y_t=0$, it means that $\A$ makes at least $d\left(1-\frac{\epsilon}{d}\right)=d-\epsilon$ mistakes in expectation.
In the case that input sequence contains less than $d$ entries with $y_t=0$, then for input sequence with length $T(\epsilon)\geq\frac{kd^2}{\epsilon}+d-1$, $\A$ will make at least $\frac{kd^2}{\epsilon}\left(1-\frac{1}{k}\left(k-\frac{\epsilon}{d}\right)\right)=d$ mistakes in expectation.

\end{document}